\def\eqref#1{equation~\ref{#1}}
\def\1{\bm{1}}
\DeclareMathAlphabet{\mathsfit}{\encodingdefault}{\sfdefault}{m}{sl}
\SetMathAlphabet{\mathsfit}{bold}{\encodingdefault}{\sfdefault}{bx}{n}
\newcommand{\ie}{i.e., }
\newcommand{\eg}{e.g., }
\definecolor{rebuttalcolor}{HTML}{000000}  
\newcommand{\rebuttal}[1]{\textcolor{rebuttalcolor}{#1}} 
\definecolor{goodcell}{HTML}{bfd9c1}
\definecolor{badcell}{HTML}{e3a6a6}
\definecolor{first}{HTML}{00A64F}  
\definecolor{second}{HTML}{006EB8} 
\newcommand{\one}[1]{\textcolor{first}{\bf#1}}
\newcommand{\two}[1]{\textcolor{second}{\bf#1}}
\newcommand{\three}[1]{{\bf#1}}
\newtheorem{theorem}{Theorem}[section]
\newtheorem{lemma}[theorem]{Lemma}
\newcommand{\bfA}{{\bf A}}
\newcommand{\bfB}{{\bf B}}
\newcommand{\bfC}{{\bf C}}
\newcommand{\bfD}{{\bf D}}
\newcommand{\bfF}{{\bf F}}
\newcommand{\bfI}{{\bf I}}
\newcommand{\bfL}{{\bf L}}
\newcommand{\bfc}{{\bf c}}
\newcommand{\bfx}{{\bf x}}
\newcommand{\bff}{{\bf f}}
\newcommand{\bfDelta}{{\boldsymbol \Delta}}
\newcommand{\bfdeltaSmall}{{\boldsymbol{\delta}}}
\newcommand{\ourmethod}{\textsc{GRAMA}}
\title{\ourmethod: Adaptive Graph  Autoregressive Moving Average Models 
}
\author{Moshe Eliasof \\
Department of Applied Mathematics\\
University of Cambridge\\
Cambridge, United Kingdom
\And
Alessio Gravina, Andrea Ceni, \\ \textbf{Claudio Gallicchio, Davide Bacciu} \\
Department of Computer Science \\
University of Pisa \\
Pisa, Italy
\AND
Carola-Bibiane Sch\"onlieb \\
University of Cambridge\\
Cambridge, United Kingdom
}
\begin{document}

\maketitle

\begin{abstract}
Graph State Space Models (SSMs) have recently been introduced to enhance Graph Neural Networks (GNNs) in modeling long-range interactions. Despite their success, existing methods either compromise on permutation equivariance or limit their focus to pairwise interactions rather than sequences. Building on the connection between Autoregressive Moving Average (ARMA)  and SSM, in this paper, we introduce \ourmethod, a Graph Adaptive method based on a learnable Autoregressive Moving Average (ARMA) framework that addresses these limitations. By transforming from static to sequential graph data, \ourmethod~leverages the strengths of the ARMA framework, while preserving permutation equivariance. Moreover, \ourmethod{} incorporates a selective attention mechanism for dynamic learning of ARMA coefficients, enabling efficient and flexible long-range information propagation. We also establish theoretical connections between \ourmethod~and Selective SSMs, providing insights into its ability to capture long-range dependencies. Extensive experiments on 14 synthetic and real-world datasets demonstrate that \ourmethod~consistently outperforms backbone models and performs competitively with state-of-the-art methods.
\end{abstract}

\section{Introduction}
\label{sec:intro}
Neural graph learning has become crucial in handling graph-structured data across various domains, such as social networks \citep{Kipf2016, SAGE}, molecular interactions \citep{xu2018how, bouritsas2022improving}, and more \citep{khemani2024review}. The most popular framework of neural graph learning is that of Message Passing Neural Networks (MPNNs). Some prominent examples are GCN \citep{Kipf2016}, GAT \citep{veličković2018graph}, GIN \citep{xu2018how}, and GraphConv \citep{morris2019weisfeiler}.  However, many MPNNs suffer from a critical shortcoming of \emph{oversquashing} \citep{alon2021on, diGiovanniOversquashing}, that hinders their ability to model long-range interactions. To address this limitation, several proposals were made, from graph rewiring \citep{Topping2022, diGiovanniOversquashing, karhadkar2023fosr}, to multi-hop MPNNs \citep{drew}, weight space regularization \citep{gravina_adgn, gravina_swan}, as well as Graph Transformers (GTs) \citep{Yun2019, Dwivedi2022, Kreuzer2021a}. Specifically, GTs became popular because of their theoretical and often practical ability to capture long-range node interactions through the attention mechanism. However, the quadratic computational cost of full attention limits their scalability, and in some cases, they were found to underperform on long-range benchmarks when compared to standard MPNNs \citep{tonshoff2023where}.

At the same time, State Space Models (SSMs), such as S4 \citep{Gu2021} and Mamba \citep{Gu2023}, have emerged as promising, linear-complexity alternatives to Transformers. SSMs leverage a recurrent and convolutional structure to efficiently capture long-range dependencies while maintaining linear time complexity \citep{Nguyen2023}. Contemporary models like Mamba develop selective filters that prioritize context through input-dependent selection, offering compelling advantages in processing long sequences with reduced computational demands compared to transformers \citep{Gu2023}. 
Despite these benefits, adapting SSMs to the non-sequential structure of graphs remains a significant challenge. Perhaps the biggest challenge lies in the \rebuttal{fundamental} question of \emph{``how to transform a graph into a sequence?''}. To this end, several approaches were proposed, from a graph-to-sequence heuristic in \citet{wang2024mamba}, to studying the relationship between SSMs and spectral GNNs by pairwise interactions \citep{huang2024can}, as well as sample-based random walk sequencing of the graph \citep{behrouz2024graphmamba}. However, as we discuss later, some of them lose the permutation-equivariance property desired in GNNs, while others do not take advantage of the sequence processing ability of SSMs. \rebuttal{These limitations hinder their ability to fully leverage sequence-processing capabilities, especially for addressing oversquashing in GNNs.} 
\rebuttal{ To resolve these issues,} we propose, instead, a complementary approach -- transforming a static input graph into a sequence of graphs, combined with an adaptive neural autoregressive moving-average (ARMA) mechanism, called \ourmethod. We show that \ourmethod~is theoretically equivalent to an SSM on graphs. Our \ourmethod~allows us to enjoy the benefits of sequential processing mechanisms like SSMs, coupled with any GNN backbone, from MPNNs to graph transformers, while maintaining backbone properties, such as permutation-equivariance.

\textbf{Main Contributions.} Our  Adaptive 
{\underline{\textbf{Gr}}aph \underline{\textbf{A}}utoregressive \underline{\textbf{M}}oving \underline{\textbf{A}}verage} (\ourmethod) model offers several advancements in the conjoining of dynamical systems theory into GNNs:
\begin{itemize}[leftmargin=*]\setlength\itemsep{0em}
    \item \textbf{\rebuttal{Principled} Integration of SSMs in GNNs.} We enable the use of sequence-based models (like ARMA) coupled with virtually any GNN backbone, by transforming graph inputs into temporal sequences without sacrificing permutation invariance.    
    \item \textbf{Theoretical Understanding of the coupling of SSMs and GNNs.} We demonstrate that augmenting GNNs with ARMA via our \ourmethod~has an equivalent SSM model.
    \item \rebuttal{\textbf{Mitigation of the oversquashing problem.} We provide the theoretical foundation that our GRAMA effectively addresses the oversquashing phenomenon in GNNs and improves the long-range interaction modeling capabilities.}
    \item \textbf{Strong Practical Performance.} We demonstrate our \ourmethod~on three popular backbones  (GCN \citep{Kipf2016}, GatedGCN \citep{gatedgcn}, and GPS \citep{graphgps}) and show the compelling performance   by \ourmethod~on 14 synthetic and real-world datasets.
\end{itemize}

\section{Related Work}
\label{sec:related}
We now provide an overview and discussion of related topics and works to our \ourmethod. In \Cref{app:additional_related_work}, we discuss additional related works. 

\textbf{Long-Range Interactions on Graphs.}
GNNs rely on message-passing mechanisms to aggregate information from neighboring nodes, which limits their ability to capture long-range dependencies, as highlighted by \citet{alon2021on, diGiovanniOversquashing}. Models like GCN \citep{Kipf2016}, GraphSAGE \citep{SAGE}, and GIN \citep{xu2018how} face challenges such as over-smoothing \citep{nt2019revisiting,Oono2020Graph,cai2020note, rusch2023survey} and over-squashing \citep{alon2021on, Topping2022, diGiovanniOversquashing}, which hinder long-range information propagation—critical in applications like bioinformatics \citep{baek2021accurate,LRGB}  \rebuttal{ and heterophilic settings \citep{luan2024heterophilicgraphlearninghandbook, snowflake}.}
To address these limitations, various methods have emerged, including graph rewiring \citep{Topping2022, karhadkar2023fosr}, weight space regularization \citep{gravina_adgn, gravina_swan}, and Graph Transformers (GTs). GTs, which capture both local and global interactions, have been particularly promising, as demonstrated by models like SAN \citep{Kreuzer2021}, Graphormer \citep{Ying2021}, and GPS \citep{graphgps}. These models often incorporate positional encodings, such as Laplacian eigenvectors \citep{Dwivedi2021} or random-walk structural encodings (RWSE) \citep{dwivedi2022graph}, to encode graph structure. However, the quadratic complexity of full attention in GTs presents scalability challenges. Recent innovations like sparse attention mechanisms \citep{Zaheer2020, Choromanski2020}, Exphormer \citep{Shirzad2023}, and linear graph transformers \citep{wu2023kdlgt, deng2024polynormer} address these bottlenecks, improving efficiency and scalability for long-range propagation.

\textbf{State Space Models (SSMs).}
~SSMs, traditionally used for time series analysis \citep{hamilton1994state,Aoki2013}, process sequences through latent states. However, classic SSMs
 struggle with long-range dependencies and lack parallelism, limiting their computational efficiency. Recent advances, such as the Structured State Space Sequence model (S4) \citep{Gu2021, Fu2023}, mitigate these issues by employing linear recurrence as a structured convolutional kernel, enabling parallelization on GPUs. Despite this, simple SSMs still underperform compared to attention models in natural language tasks.
Mamba \citep{Gu2023} improves the ability of SSMs to capture long-range dependencies by selectively controlling which sequence parts influence model states. Mamba has shown promising results, outperforming Transformers in several benchmarks \citep{Gu2023, Liu2024} while being more computationally efficient. 
The combination of SSMs with graph models presents challenges, particularly in transforming the articulated connectivity of graphs into sequences. For instance, Graph-Mamba \citep{wang2024mamba} orders nodes by degree, but this heuristic approach sacrifices permutation-equivariance, a desirable property in GNNs. Similarly, \citet{behrouz2024graphmamba} propose generating sequences via random walks, which improves performance but also sacrifices permutation-equivariance while adding non-determinism to the model. \rebuttal{Also, turning a graph into a sequence based on a policy, such as sorting nodes by degree, limits direct use of the input graph, as multiple graphs can share the same node degrees and thus be indistinguishable.}
\citet{huang2024can} explored links between spectral GNNs and graph SSMs, focusing on pairwise interactions; however, this design choice may not fully exploit the sequence-handling capacity of SSMs \rebuttal{and may reach the state of oversquahsing earlier because of the use of powers of the adjacency matrix \citep{diGiovanniOversquashing}}.
In this work, we harness the potential of SSMs by adopting a structure inspired by the connection between SSMs and ARMA models\rebuttal{. By transforming static graphs into sequences, \ourmethod{} maintains permutation-equivariance, a desired property in GNNs \citep{bronstein2021geometric}, also useful for long-propagation \citep{pan2022permutation,schatzki2024theoretical}, while enabling effective learnable and selective long-range propagation.}

\textbf{Autoregressive Moving Average Models (ARMA).}
~ARMA models, introduced by \citet{whittle1951hypothesis}, combine an autoregressive (AR) component, modeling dependencies on previous time steps, with a moving average (MA) component, considering residuals. Widely applied in stationary time series analysis \citep{box1970}, ARMA models are equivalent to state space models (SSMs) \citep{hamilton1994}. An ARMA($p,q$) model considers previous $p$ states and $q$ residuals $\delta(\cdot)$, and is governed by the following equation:
\vspace{-2mm}
\begin{equation}
\label{eq:arma}    
f(t) =
\sum_{i=1}^{p} \phi_{i}f(t-i) + \sum_{j=1}^{q} \theta_{j}\delta(t-j) + \delta(t),
\end{equation}
where $\{\phi_i\}_{i=1}^{p}, \ \{\theta_i\}_{j=1}^{q}$ are the autoregressive and moving average coefficients, respectively.

Although ARMA models are traditionally used for processing sequences, they have also been studied for classical graph filtering \citep{isufi2016autoregressive} and more recently formulated as an MPNN in \citet{bianchi2019graph}. The Graph ARMA model \citep{bianchi2019graph} introduced a learnable ARMA version for GCNs, using recursive 1-hop filters to create a structure resembling ARMA methods.
In this paper, we introduce \ourmethod, a method that leverages neural ARMA models by transforming a static graph input into a graph sequence. Different than \citet{bianchi2019graph}, which uses the static input graph and formulates a recursive ARMA model through a spectral convolution perspective, our \ourmethod~incorporates a selective and graph adaptive mechanism that learns ARMA coefficients along the graph sequence. This dynamic adjustment of coefficients directly addresses oversquashing by preserving long-range dependencies and enabling adaptive control over feature propagation. Additionally, \citet{bianchi2019graph} uses an ARMA($1,1$) model with non-linearities between steps, hindering its direct conversion into an SSM, while we show that our \ourmethod~has an equivalent SSM, providing deeper theoretical understandings.

\section{\ourmethod}
\label{sec:method}
Although a graph is a static structure, the process of message passing introduces a dynamic element. In message passing, information is propagated through the graph, allowing nodes to update their states based on the states of their neighbors. This dynamic behavior can be viewed through the lens of dynamical systems, where the state of each node evolves according to certain aggregating rules, as discussed in \Cref{sec:related}.
This perspective is instrumental in Recurrent Neural Networks (RNNs), which are designed to handle sequential data and capture temporal dependencies. By treating the message-passing process as a dynamical system, we can leverage the strengths of RNNs to model the evolution of node states over time.
The model we propose, \ourmethod{}, takes inspiration from the architectural structure of the latest generation of sequential models, like S4 \citep{gu2021efficiently}, Mamba \citep{Gu2023}, LRU \citep{orvieto2023resurrecting}, and xLSTM \citep{beck2024xlstm}. To import these powerful sequential models to graph learning, we first translate static input graphs into sequences of graphs. Then, the \ourmethod{} block transforms such graph sequence into another graph sequence, while considering the structure of the graph. Each \ourmethod{} block is linear, and non-linear activations are applied between \ourmethod{} blocks to increase the flexibility of the overall model. Below, we discuss in detail the different aspects of our \ourmethod~-- from its initialization to the graph sequence processing blueprint by ARMA, to the learning of ARMA coefficients in a graph adaptive manner. The overall design of \ourmethod~is illustrated in \Cref{fig:model}.
\begin{figure}
    \centering
    \includegraphics[width=0.8\linewidth]{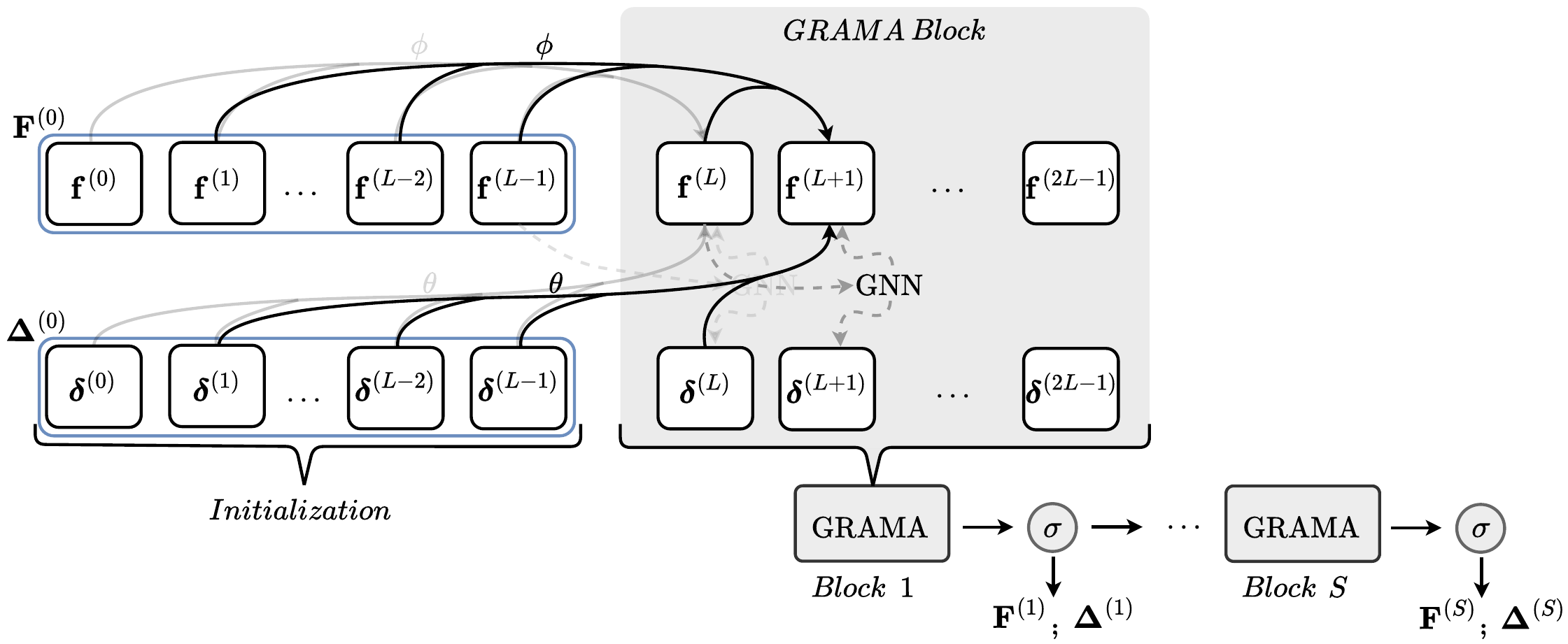}
    \caption{An illustration of the \ourmethod{} framework \rebuttal{with $R{=}L$ recurrences}. 
    We embed a static input graph into a sequence of graphs. This sequence is the input for the first \ourmethod{} block.
    A \ourmethod{} block is composed of a neural ARMA$(p,q)$ layer with adaptive autoregressive $\phi= \{\phi_i\}_{i=1}^{p}$ and moving average $\theta = \{\theta_j\}_{j=1}^{q}$ coefficients, and a graph-informed residual update via a GNN backbone. 
    A \ourmethod{} block transforms a graph sequence into a sequence. Each \ourmethod{} block is a linear system, and non-linearities are applied between \ourmethod{} blocks, \rebuttal{as in \Cref{eq:sTH_grama_block}.}}
    \label{fig:model}
    \vspace{-1em}
\end{figure}

\textbf{Notations.}
 We denote a graph by \( G = (V, E) \), with $|V|=n$ nodes and $|E|=m$ edges. A node $v$ is associated with input node features $f_v \in \mathbb{R}^{c}$. The node features are then denoted by  $\bff \in \mathbb{R}^{n \times c}$.

 \textbf{Initialization.} Processing information with ARMA or SSM frameworks, by design, requires a sequence. As discussed in \cref{sec:related}, previous studies on graph SSMs have chosen to transform the graph into a sequence by means of heuristic node ordering, 
 random walk sampling, or by considering pairwise interactions (edges) as sequences of length 2. While these choices are valid, and show strong performance in practice, they also introduce challenges compared to common graph learning approaches, or may not fully utilize the underlying sequence processing framework. Specifically, the first two approaches (node ordering and walk sampling) do not maintain the permutation equivariance desired in GNNs, and the third (pairwise interactions) considers only very short sequences, while one key benefit of
the ARMA and SSM frameworks is their ability to capture long-range dependencies in long sequences \citep{gu2021combining}. To address these challenges, we propose to transform a \emph{static graph} into a \emph{sequence of graphs}, such that each node is equipped with a sequence of input node feature vectors rather than a single input node feature vector. By following this idea, we can employ sequence processing frameworks such as ARMA on data beyond pairwise interactions, while maintaining permutation-equivariance, as we discuss later. 
Specifically, we first stack the input node features $\bff$ for $L$ times, where $L>0$  is a hyperparameter that determines the length of the sequence to process, followed by the application of a set of MLPs, $\{g_k\}_{k=0}^{L-1}$, one for each $k=0, \ldots, L-1$, 
that embed the original $c$ node features into $d$ channels: 
 \begin{equation}
     \label{eq:initialization}
     \bfF^{(0)} = \begin{bmatrix}
           \bff^{(0)} , \ldots, \bff^{(L-1)}
   \end{bmatrix} = \begin{bmatrix}
         g_0(\bff), \ldots, g_{L-1}(\bff)
     \end{bmatrix}, \quad \bfF^{(0)} \in \mathbb{R}^{L \times n \times d}.
 \end{equation}
We refer to the sequence encoded by $\bfF^{(0)}$ as the initial input sequence, and to work with an ARMA model, we also define the \emph{residuals} sequence as follows:
\begin{equation}
    \label{eq:residuals}
    \bfDelta^{(0)} = \begin{bmatrix}
    \bfdeltaSmall^{(0)}, \ldots, \bfdeltaSmall^{(L-1)}
    \end{bmatrix}, \quad \bfDelta^{(0)}  \in \mathbb{R}^{L \times n \times d},
\end{equation}
where $\bfdeltaSmall^{(\ell)} = \bff^{(\ell+1)} - \bff^{(\ell)}$ for $\ell=0,\ldots, L-2$. Note that by subtracting subsequent elements in the input sequence $\bfF^{(0)}$, we are left with $L-1$ elements. Therefore, we choose the last residual term in $\bfDelta^{(0)}$ (that is $\bfdeltaSmall^{(L-1)}$) to be a matrix of zeros at the initialization step.
\\
We note that via this approach, we can perform sequence modeling using ARMA on the sequence dimension ($L$) while retaining the ability to use any desired backbone GNN to exchange information between nodes, as shown in \Cref{sec:graph_neural_arma_block}, thus rendering our \ourmethod~a drop-in mechanism.

\subsection{Graph Neural ARMA}
\label{sec:graph_neural_arma_block}
\textbf{Autoregressive (AR) Layers.}
An AR$_p$ \rebuttal{captures the relationship between current node features and their $p>0$ previous historical values, through the learnable coefficients $ \{\phi_{{i}} \}_{{i}=1}^{p}$ discussed in Section \ref{sec:learningCoefficients}.} Formally, given a sequence of node features of length $L$   $ \begin{bmatrix}    
\bff^{(\rebuttal{\ell})}, \ldots, \bff^{(\rebuttal{\ell}+L-1)}\end{bmatrix}$, assuming  $p \leq L$, the  node features at step $\rebuttal{\ell+}L$ read:
\begin{equation}
\label{eq:AR_layer}
\bff_{\text{AR}_p}^{(\rebuttal{\ell}+L)}=\text{AR}_p(\bff^{(\rebuttal{\ell})}, \ldots, \bff^{(\rebuttal{\ell}+L-1)}) = \sum_{i=1}^{p} \phi_{{i}} \mathbf{f}^{({\rebuttal{\ell}+L-i})}.
\end{equation}
\textbf{Moving Average (MA) Layers.} Given a residuals sequence $ \begin{bmatrix}    
\bfdeltaSmall^{(\rebuttal{\ell})}, \ldots, \bfdeltaSmall^{(\rebuttal{\ell}+L-1)}\end{bmatrix}$, 
an MA$_q$ layer with $\{\theta_{{j}}\}_{{j}=1}^{q}$  learnable coefficients, captures the dependency of the latest $0<q\leq L$ residuals:
\begin{equation}    \bff_{\text{MA}_q}^{(\rebuttal{\ell}+L)} = \text{MA}_q(\bfdeltaSmall^{({\rebuttal{\ell}})}, \ldots, \bfdeltaSmall^{({\rebuttal{\ell}+L-1})}) = \sum_{{j}=1}^{q} \theta_{{j}} \bfdeltaSmall^{({\rebuttal{\ell}+L-j})}.
\end{equation}
\textbf{\ourmethod~\rebuttal{Recurrence}.}
Combining AR$_p$ and MA$_q$ layers, leads to the  ARMA($p,q$) recurrence:
\begin{equation}
\label{eq:graphARMA}
    \bff^{(\rebuttal{\ell}+L)} =\bff_{\text{AR}_p}^{(\rebuttal{\ell}+L)} + \bff_{\text{MA}_q}^{(\rebuttal{\ell}+L)} + \bfdeltaSmall^{(\rebuttal{\ell}+L)},
\end{equation}
where $\bfdeltaSmall^{(\rebuttal{\ell}+L)}$ is the residual of the last step, which is given by a GNN backbone that is optimized jointly with the ARMA coefficients, that is, $\bfdeltaSmall^{(\rebuttal{\ell}+L)}=\text{GNN}(\bff^{(\rebuttal{\ell}+L-1)};G)$.
\rebuttal{Here, we apply the GNN backbone without non-linearity so that each recurrence step within a \ourmethod{} block is a linear function.}
In particular, note that the $\text{GNN}$ can be any graph neural network, \rebuttal{because at each recurrence, \ourmethod{} processes a sequence of graphs by updating each node feature based on its sequence via the terms $\bff_{\text{AR}_p}^{(\rebuttal{\ell}+L)}, \bff_{\text{MA}_q}^{(\rebuttal{\ell}+L)}$, coupled a with a GNN in the term $\bfdeltaSmall^{(\rebuttal{\ell}+L)}$. Moreover, the structure of the terms $\bff_{\text{AR}_p}^{(\rebuttal{\ell}+L)}, \bff_{\text{MA}_q}^{(\rebuttal{\ell}+L)}$ includes multiple residual connections, which can implement standard skip-connections, retaining the expressiveness of the backbone GNN. \Cref{sec:experiments} showcases \ourmethod{} with various GNN backbones, from MPNNs to graph transformers.}

\textbf{\rebuttal{\ourmethod{} Block.}}  \Cref{eq:graphARMA} describes a \emph{single} recurrence step within a \ourmethod~block. Similar to other recurrent mechanisms, we apply $R$ recurrences, \rebuttal{where} $R>1$ is a hyperparameter. \rebuttal{Thus, given the initial states  $\bfF^{(0)}$ and residuals $\bfDelta^{(0)}$, after $R$ recurrences according to \Cref{eq:graphARMA}, we obtain  updated states $\begin{bmatrix}
           \bff^{(L)} , \ldots, \bff^{(L + R -1)}
   \end{bmatrix}$ and residuals $\begin{bmatrix}
    \bfdeltaSmall^{(L)}, \ldots, \bfdeltaSmall^{(L + R -1)} \end{bmatrix}$ sequences, followed by an element-wise application of non-linearity $\sigma$:}
    \rebuttal{
    \begin{equation}
    \label{eq:updatedStates_Residuals}
        \bfF^{(1)} = \begin{bmatrix}
           \sigma(\bff^{(L)})   , \ldots, \sigma(\bff^{(L + R -1)})  \end{bmatrix} , \quad  \bfDelta^{(1)} = \begin{bmatrix}
    \sigma(\bfdeltaSmall^{(L)}) , \ldots, \sigma(\bfdeltaSmall^{(L + R -1)})
   \end{bmatrix}.
    \end{equation}}
\rebuttal{In practice},  as discussed in \Cref{app:hyperparams}, $R$ is chosen such that $p=q=R=L$, \rebuttal{and the obtained updated sequences are 
    $
        \bfF^{(1)} = \begin{bmatrix}
           \sigma(\bff^{(L)})   , \ldots, \sigma(\bff^{(2L-1)})  \end{bmatrix} , \quad  \bfDelta^{(1)} = \begin{bmatrix}
    \sigma(\bfdeltaSmall^{(L)}) , \ldots, \sigma(\bfdeltaSmall^{(2L-1)})
   \end{bmatrix}.
    $} 
\textbf{Deep GRAMA.} 
\rebuttal{In \Cref{eq:updatedStates_Residuals}}, we describe the action of a \emph{single, first}  \ourmethod~block. Overall, each block performs \rebuttal{$R$} recurrence steps. As such, the first \ourmethod~block yields $\rebuttal{R}$ new states and residuals \rebuttal{ encoded in} $\bfF^{(1)}$ \rebuttal{ and} $\bfDelta^{(1)}$, \rebuttal{respectively, that} can then be \rebuttal{processed by} subsequent \ourmethod~blocks. That is, we can stack $S \geq 1$ \ourmethod~blocks, \rebuttal{each block with its own parameters}, forming a deep \ourmethod~network\rebuttal{, where the updated sequences at the $s$-th \ourmethod{} block are:}
    \rebuttal{
    \begin{equation}
    \small
    \label{eq:sTH_grama_block}
        \bfF^{(s)} = \begin{bmatrix}
           \sigma(\bff^{(L+(s-1)R})   , \ldots, \sigma(\bff^{(L + sR -1)})  \end{bmatrix} , \quad  \bfDelta^{(s)} = \begin{bmatrix}
    \sigma(\bfdeltaSmall^{(L+(s-1)R)}) , \ldots, \sigma(\bfdeltaSmall^{(L + sR -1)})
   \end{bmatrix},
    \end{equation}
    for $s=1,\ldots,S$.}
Note that the depth of a \ourmethod~network is therefore equivalent to the number of systems $S$ to be learned, multiplied by the number of recurrent steps \rebuttal{$R$}. The outputs of the \ourmethod~network are then the final state and residual sequences $\bfF^{(S)},\ \bfDelta^{(S)}$. We illustrate this process in \Cref{fig:model}.  Because in our experiments we are interested in static graph learning problems, we feed the latest state matrix within the sequence $\bfF^{(S)}$ to a readout layer to obtain the final prediction, as elaborated in  \rebuttal{\Cref{app:overall_architecture}}. \rebuttal{The additional processing in \ourmethod{} introduces some computational overhead, as detailed in \Cref{app:complexity_runtimes}. However, this cost remains reasonable compared to other methods and yields significant performance improvements, as detailed in \Cref{sec:experiments}.}

\subsection{Learning Adaptive Graph  ARMA Coefficients}
\label{sec:learningCoefficients}
We now introduce our graph adaptive approach for learning the ARMA coefficients, which is a key component in our approach to allow a flexible and selective \ourmethod. 

\textbf{Naive ARMA Learning.} The most straightforward way to learn the AR and MA coefficients, $\{ \phi_{{i}}\}_{i=1}^{p}$ and $\{ \theta_{{j}}\}_{j=1}^{q}$, 
is to consider them as parameters of the neural network and learn them via gradient descent. However, this yields coefficients that are identical for all inputs, thereby not adaptive. This approach is directly linked to non-selective weights in SSM models \citep{Gu2021}, which were shown to be less effective compared to selective coefficients \citep{Gu2023}.
\\
\textbf{Selective ARMA Learning.} To allow selective ARMA coefficient learning similarly to Mamba  \citep{Gu2023}, we use an attention mechanism \citep{Vaswani2017} applied over the state and residual sequences $\bfF^{(s)}, \ \bfDelta^{(s)}$ at each \ourmethod~block $s=1,\ldots,S$. The rationale behind this construction is that an attention layer assigns scores between elements within the sequence. Formally, we obtain two scores matrices $\mathcal{{A}}_{\bfF{^{(s)}}}, \mathcal{{A}}_{\bfDelta{^{(s)}}} \in [0,1]^{L \times L}$. The last row in each matrix represents the predicted coefficients for our \ourmethod, $\{ \phi_{{i}}\}_{i=1}^{p}$ and $\{ \theta_{{j}}\}_{j=1}^{q}$, respectively. However, the SoftMax normalization in standard attention layers yields non-negative pairwise values, which is not consistent with the usual choice of ARMA coefficients. Therefore, we follow the self-attention implementation \citep{Vaswani2017} up to the SoftMax step, and we normalize the scores to be in $[-1,1]$ while complying with a sum-to-one constraint. We note that, this procedure facilitates learning stability, such that ARMA coefficients do not explode or vanish, \rebuttal{and its design is guided by the insights from} Theorems \ref{thm:stability} and \ref{thm:propagation}. We also note that this overall construction yields two-fold adaptivity in the predicted ARMA coefficients: First, the attention mechanism allows to be selective with respect to its input, which are the sequences $\bfF^{(s)},\ \bfDelta^{(s)}$. Second, because these sequences are coupled with a GNN backbone, as shown in \Cref{eq:graphARMA}, it implies that the input node features and the graph structure influence the coefficients.  We provide further implementation details in \Cref{app:implementation_details}, and a comparison between naive and selective ARMA learning in \Cref{app:ablations}.

\section{Theoretical Properties of \ourmethod}
\label{sec:properties}

\rebuttal{We now formally cast common knowledge formulated in the context of RNNs, control theory, and SSMs \citep{yu2019review,slotine1991applied,khalil_nonlinear_2002} to the realm of GNNs.}
We discuss the main theoretical properties of our \ourmethod: (i) its representation as an SSM model, (ii) its stability, and (iii) its ability to model long-range interactions in graphs. All the proofs are provided in \Cref{app:proofs}. 
\\
\textbf{Connection to SSM.} As discussed in \Cref{sec:method}, each \ourmethod~block is fundamentally an ARMA model. In \Cref{thm:equivalence}, we formalize the equivalence between ARMA models and linear SSMs. This allows us to interpret our \ourmethod~model as a stack of graph-informed SSMs through the backbone GNN encoded in \Cref{eq:graphARMA}. 
\begin{theorem}[Equivalence between ARMA models and State Space Models]
\label{thm:equivalence}
For every ARMA model, there exists
an equivalent State Space Model (SSM) representation, and conversely, for every linear SSM, there exists
an equivalent ARMA model representation.
\end{theorem}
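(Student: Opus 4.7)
The plan is to establish the two directions of the equivalence independently, using companion-form constructions for one direction and the Cayley--Hamilton theorem for the other, in a manner that is standard in linear systems theory but which I will adapt to the ARMA form stated in \Cref{eq:arma}.

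\textbf{ARMA $\Rightarrow$ SSM.} Given an ARMA$(p,q)$ model as in \Cref{eq:arma}, I would first set $N = \max(p,q+1)$ and pad the shorter coefficient sequence with zeros so that both sums run up to $N$. I would then introduce a state vector $x(t) \in \sR^{2N}$ that stacks the $N$ most recent values of $f$ and the $N$ most recent residuals $\delta$, e.g.\ $x(t) = [f(t-1),\ldots,f(t-N),\delta(t-1),\ldots,\delta(t-N)]^\top$. The update $x(t+1) = A x(t) + B \delta(t)$ is then realized by a block companion matrix $A$ whose top row encodes the $\phi_i$ and the cross-block interaction encodes the $\theta_j$ contribution from the current input $\delta(t)$, while the remaining rows are shift operators. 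The output $f(t) = C x(t+1) + D\delta(t)$ is read off by selecting the appropriate top entry. Writing this out explicitly recovers exactly \Cref{eq:arma}, which gives the SSM realization.

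\textbf{SSM $\Rightarrow$ ARMA.} Given a linear SSM $x(t+1) = A x(t) + B u(t)$, $y(t) = Cx(t) + D u(t)$ with $A \in \sR^{N\times N}$, the key tool is the Cayley--Hamilton theorem: if $p_A(\lambda) = \lambda^N - \sum_{i=1}^{N} \alpha_i \lambda^{N-i}$ is the characteristic polynomial of $A$, then $A^N = \sum_{i=1}^{N} \alpha_i A^{N-i}$. Unrolling the recurrence yields $y(t+N) = C A^N x(t) + \sum_{k=0}^{N-1} C A^{N-1-k} B u(t+k) + D u(t+N)$, and applying Cayley--Hamilton allows replacing $C A^N x(t)$ by a linear combination $\sum_{i=1}^{N} \alpha_i C A^{N-i} x(t)$, each term of which can in turn be expressed through $y(t+N-i)$ minus past input contributions. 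Collecting like terms produces a recurrence of ARMA type with $\phi_i = \alpha_i$ and a moving-average part whose coefficients $\theta_j$ are determined by the Markov parameters $C A^{k} B$ and $D$.

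\textbf{Main obstacle and cleanup.} The straightforward part is the existence of the two realizations; the subtler part is keeping the two orders $(p,q)$ in the ARMA model and $N$ in the SSM consistent, since in general one may need $N = \max(p, q{+}1)$ ARMA coefficients padded with zeros to obtain a square, minimal, or non-minimal SSM, and vice versa one may need to allow an ARMA model of order equal to the dimension of the SSM's minimal realization (the McMillan degree). I would state the theorem as an existence result rather than a bijection, so that padding with zero coefficients is permitted on either side; this sidesteps the need to discuss minimality/controllability/observability. A short concluding remark noting that both constructions are linear and finite-dimensional, hence the equivalence extends pointwise over nodes when the scalar signals are replaced by node-feature matrices as in \Cref{eq:graphARMA}, would justify the subsequent interpretation of a \ourmethod{} block as a graph-informed SSM.
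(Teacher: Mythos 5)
Your proposal is correct, and the forward direction (ARMA $\Rightarrow$ SSM) is essentially the paper's own argument: both build a block companion state matrix whose top row carries the $\phi_i$ and $\theta_j$, with shift operators below and the new residual injected into the $\delta$-block. The converse direction, however, takes a genuinely different route. You invoke Cayley--Hamilton to annihilate the state: writing $y(t+N-i) = C A^{N-i}x(t) + (\text{input terms})$ and forming $y(t+N) - \sum_i \alpha_i\, y(t+N-i)$ kills the $C\bigl(A^N - \sum_i \alpha_i A^{N-i}\bigr)x(t)$ term, leaving a bona fide fixed-order ARMA$(N,N)$ recurrence whose AR coefficients are the characteristic-polynomial coefficients of $A$ and whose MA coefficients come from the Markov parameters $CA^kB$ and $D$. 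The paper instead unrolls the recurrence all the way back to the initial condition, obtains $f_t = C A^t x_0 + (CB+D)\delta_t + \sum_{k=1}^{t-1} C A^k B\,\delta_{t-k}$, and identifies this as an ARMA model by \emph{defining} $x_0$ to be the stack of past outputs and reading the AR coefficients off the row vector $CA^p$; this makes the resulting ARMA order depend on $t$ and leans on a particular choice of initial state, so your version is the more standard and arguably cleaner existence argument. What the paper's unrolled form buys in exchange is that the powers $A^k$ and the vectors $A^kB$ remain explicitly visible, and these are exactly the objects reused in the subsequent stability and long-range-propagation results (the spectral-radius lemma and the vanishing of $A^kB$), which is why the authors prefer it. Your closing remarks on order bookkeeping via $N=\max(p,q+1)$ and zero-padding, and on extending the scalar argument channel-wise to node-feature matrices, match caveats the paper also makes (it treats the single-channel case and shares coefficients across channels), so there is no gap.
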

\textbf{Stability.} Representing an ARMA system as an SSM involves the description of a linear recurrence equation as $ \bff^{(L)} = \sum_{i=1}^{p} \phi_{{i}} \mathbf{f}^{({L-i})} + \sum_{{j}=1}^{q} \theta_{{j}} \bfdeltaSmall^{({L-j})}+ \bfdeltaSmall^{(L)} $, or, alternatively, in matrix form as $ \mathbf{X}^{(L)} = \bfA \mathbf{X}^{(L-1)} + \bfB \bfdeltaSmall^{(L)} $, with $\mathbf{X}^{(L-1)} = \begin{bmatrix}    
\bff^{(L-1)}, \ldots, \bff^{(0)}, \bfdeltaSmall^{(L-1)}, \ldots, \bfdeltaSmall^{(0)}
\end{bmatrix} $, see \Cref{app:proofs} for more details.\footnote{Note that, following the notation of \Cref{sec:method}, we can write the state $\mathbf{X}^{(L-1)}$ as the concatenation of $ \bfF^{(0)} $ and $\bfDelta^{(0)}$, i.e.,  $ \mathbf{X}^{(L-1)} = \begin{bmatrix}    
\bfF^{(0)}, \bfDelta^{(0)}
\end{bmatrix} $. \rebuttal{For simplicity of notations, we analyze the case where $R=L$.}}
In the SSM literature, the $\bfA$ matrix is called the \emph{state matrix}. The state matrix corresponding to a \ourmethod{} block is entirely determined by the set of autoregressive and moving average coefficients. Thus, each \ourmethod{} block is characterized by an adaptive state matrix, which is especially important since it directly governs the evolution of the node features $\bff$.
In particular, the stability of this evolution can be established by analyzing the powers of the state matrix, \rebuttal{ as widely studied in the context of RNN and SSM theory \citep{pascanu2013difficulty,gu2021combining}. Hence, the stability of a GRAMA block can be} characterized by the following \Cref{lem:stablity_ssm}.

\begin{lemma}[Stability of \ourmethod{}]
\label{lem:stablity_ssm}
The linear SSM corresponding to a \ourmethod{} block with autoregressive coefficients $\{\phi_i\}_{i=1}^{p} $ is stable if and only if the spectral radius of its state matrix is less than (or at most equal to) 1. In particular, this happens if and only if the polynomial $P(\lambda)=\lambda^p - \sum_{j=1}^{p} \phi_j \lambda^{p-j} $ has all its roots inside (or at most on) the unit circle.
\end{lemma}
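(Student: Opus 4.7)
The plan is to use Theorem~\ref{thm:equivalence} to rewrite a \ourmethod{} block as a first-order linear discrete-time SSM $\mathbf{X}^{(L)} = \bfA \mathbf{X}^{(L-1)} + \bfB \bfdeltaSmall^{(L)}$, and then invoke the classical stability criterion for such systems: the autonomous part $\mathbf{X}^{(L)} = \bfA \mathbf{X}^{(L-1)}$ is (marginally) stable if and only if the spectral radius satisfies $\rho(\bfA) \le 1$, with the standard caveat that eigenvalues lying on the unit circle must be semi-simple. Hence the whole argument reduces to locating the spectrum of $\bfA$.

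To this end, I would first make the state explicit. Following the construction sketched after Theorem~\ref{thm:equivalence}, take
\[
\mathbf{X}^{(L-1)} = \bigl[\bff^{(L-1)}, \ldots, \bff^{(L-p)}, \bfdeltaSmall^{(L-1)}, \ldots, \bfdeltaSmall^{(L-q)}\bigr]^\top.
\]
Then \Cref{eq:graphARMA} and the pure-shift updates on the remaining coordinates force $\bfA$ to be block upper-triangular,
\[
\bfA = \begin{bmatrix} \bfA_{11} & \bfA_{12} \\ \mathbf{0} & \bfA_{22} \end{bmatrix},
\]
where $\bfA_{11}$ is the $p \times p$ companion matrix of the AR coefficients (first row $(\phi_1,\ldots,\phi_p)$, ones on the subdiagonal), $\bfA_{22}$ is the $q \times q$ nilpotent shift block that updates the residual window, and $\bfA_{12}$ collects the MA coefficients $\theta_j$. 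The fresh residual $\bfdeltaSmall^{(L)}$ enters through $\bfB$ and thus does not affect $\bfA$.

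Next I would compute the characteristic polynomial. Block triangularity together with the classical formula for the characteristic polynomial of a companion matrix gives
\[
\det(\lambda \bfI - \bfA) = \det(\lambda \bfI - \bfA_{11}) \cdot \det(\lambda \bfI - \bfA_{22}) = P(\lambda) \cdot \lambda^{q},
\]
with $P(\lambda) = \lambda^p - \sum_{j=1}^{p}\phi_j \lambda^{p-j}$. The nonzero eigenvalues of $\bfA$ therefore coincide exactly with the roots of $P$, while $\bfA_{22}$ only contributes a zero eigenvalue of multiplicity $q$, which never threatens stability. Combining this with the spectral criterion yields the stated equivalence: $\rho(\bfA)\le 1$ iff all roots of $P$ lie inside or on the unit disk, and strict stability corresponds to $\rho(\bfA)<1$, which is the classical Schur condition on the AR polynomial.

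The main obstacle I anticipate is purely bookkeeping: choosing an ordering of the state so that the recurrence really collapses to a first-order system, and cleanly separating the exogenous input $\bfdeltaSmall^{(L)}$ (absorbed into $\bfB$) from the internal residual window (which must live inside the state). Once that is set up, the factorisation of $\det(\lambda \bfI - \bfA)$ and the appeal to the discrete-time linear stability theorem are standard, so no substantive new calculation is required.
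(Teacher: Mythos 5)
Your proposal is correct and follows essentially the same route as the paper's proof: reduce to the first-order SSM via Theorem~\ref{thm:equivalence}, exploit the block upper-triangular structure of the state matrix to factor its characteristic polynomial as $P(\lambda)\cdot\lambda^{q}$ using the companion-matrix formula, and then apply the spectral-radius criterion for discrete-time linear stability. If anything, your explicit caveat that unit-modulus eigenvalues must be semi-simple for marginal stability is slightly more careful than the paper's statement, which elides this point.
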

We now give a sufficient condition for the stability of the SSM corresponding to a \ourmethod{} block.
\begin{theorem}[Sufficient condition for \ourmethod{} stability]
\label{thm:stability}
If~$\sum_{j=1}^{p} |\phi_j| \leq 1 $, then the \ourmethod{} block with autoregressive coefficients $\{\phi_i\}_{i=1}^{p} $ corresponds to a stable linear SSM. 
\end{theorem}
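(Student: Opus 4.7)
The plan is to derive this theorem directly from \Cref{lem:stablity_ssm}, which already reduces stability to a statement about the roots of the characteristic polynomial $P(\lambda)=\lambda^p - \sum_{j=1}^{p}\phi_j\lambda^{p-j}$. Thus it suffices to show that if $\sum_{j=1}^{p}|\phi_j|\le 1$, then every root $\lambda\in\mathbb{C}$ of $P$ satisfies $|\lambda|\le 1$. This is a classical polynomial-root bound (a variant of the Cauchy / Eneström--Kakeya style estimate), and no new machinery on the SSM side is needed.

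First, I would argue by contradiction. Suppose $\lambda$ is a root of $P$ with $|\lambda|>1$; in particular $\lambda\ne 0$, so I can divide $P(\lambda)=0$ by $\lambda^p$ to rewrite the root equation as
\begin{equation*}
1 \;=\; \sum_{j=1}^{p}\phi_j\,\lambda^{-j}.
\end{equation*}
Taking absolute values and applying the triangle inequality, I obtain
\begin{equation*}
1 \;\le\; \sum_{j=1}^{p}|\phi_j|\,|\lambda|^{-j} \;<\; \sum_{j=1}^{p}|\phi_j| \;\le\; 1,
\end{equation*}
where the strict middle inequality uses $|\lambda|^{-j}<1$ for all $j\ge 1$ because $|\lambda|>1$, and the final inequality is the hypothesis. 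This $1<1$ contradiction forces $|\lambda|\le 1$ for every root of $P$. Hence by \Cref{lem:stablity_ssm} the \ourmethod{} block induces a stable linear SSM.

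An equivalent (and perhaps more intuitive) way to present the argument, which I might include as a remark, is via the companion-matrix realization underlying the SSM form described before \Cref{lem:stablity_ssm}: the state matrix $\bfA$ corresponding to AR coefficients $\{\phi_i\}_{i=1}^{p}$ is the companion matrix whose nonzero entries are $\phi_1,\dots,\phi_p$ in the first row and a unit subdiagonal of ones. Its induced $\infty$-norm (maximum absolute row sum) is $\max\bigl(\sum_{j=1}^{p}|\phi_j|,\,1\bigr)=1$ under the hypothesis, and the spectral radius is bounded above by any induced norm, giving $\rho(\bfA)\le 1$ and hence stability via \Cref{lem:stablity_ssm}.

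I do not foresee a real obstacle here: the result is essentially a one-line consequence of \Cref{lem:stablity_ssm} together with a standard reverse-triangle-inequality manipulation. The only point requiring minor care is the boundary case $|\lambda|=1$: the hypothesis only yields the weak bound $\rho(\bfA)\le 1$, matching the ``less than (or at most equal to) 1'' formulation in \Cref{lem:stablity_ssm}, so I should state the conclusion as stability in the marginal sense rather than strict asymptotic contractivity, which is consistent with the assertion of the theorem.
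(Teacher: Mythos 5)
Your proof is correct and follows essentially the same route as the paper: both reduce the claim via Lemma~\ref{lem:stablity_ssm} to showing that every root of $P(\lambda)=\lambda^p-\sum_{j=1}^{p}\phi_j\lambda^{p-j}$ lies in the closed unit disk, the only difference being that the paper invokes the Lagrange root bound of \cite{hirst1997bounding} as a black box while you prove the needed case directly with the triangle inequality. One small caution on your companion-matrix remark: the full ARMA state matrix of \Cref{eq:state_matrix} also carries the $\theta_j$'s in its first row, so the row-sum bound should be applied to the AR block $\bfA_{11}$ alone (whose eigenvalues, together with zeros, exhaust the spectrum by the block-triangular structure), not to $\bfA$ itself.
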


\textbf{Long-Range Interactions.} \rebuttal{A key distinction between standard MPNNs and our \ourmethod{} lies in its neural selective sequential mechanism, which uses learned ARMA coefficients to operate across two domains: the spatial graph domain via a GNN backbone, and the sequence domain via the ARMA mechanism, enabling selective state updates.} Remarkably, the state matrices of each \ourmethod{} block play a significant role in the propagation of the information from the first sequence of node features, $\bfF^{(0)} = \begin{bmatrix} \bff^{(0)}, \ldots, \bff^{(L-1)}\end{bmatrix}$, to the last sequence of node features after $S$ \ourmethod~blocks, $\bfF^{(S)} = \begin{bmatrix} \bff^{(LS)}, \ldots, \bff^{(L(S+1)-1)}\end{bmatrix}$, especially for large $L$ and $S$. 
In fact, if the entries of the $k$-th power of the state matrix of a \ourmethod{} block
vanish, then for a stable \ourmethod{} it is impossible to model long-range dependencies of $k$ hops, in the sequence, 
as we show in \Cref{lem:longterm_ssm}.

\rebuttal{This fact relates to a broadly acknowledged problem in the RNN literature, the vanishing gradient issue \citep{hochreiter2001gradient,bengio1994learning,orvieto2023universality}: }the entries of the powers of a matrix with a spectral radius less than 1 can quickly vanish, making it challenging for gradient-based algorithms to effectively long-range patterns.
Therefore, to bias the long-term propagation of the information of a \ourmethod{} block, we can initialize the state matrix to have its eigenvalues close enough to the unitary circle \rebuttal{, following the footsteps of recent RNN methodologies \citep{orvieto2023resurrecting,arjovsky2016unitary,de2024griffin}. In fact, } the closer the eigenvalues are to the unitary circle, the slower the powers of the state matrix vanish \citep{horn2012matrix}.
The following \Cref{thm:propagation} provides a criterion to control the long-range interaction of \ourmethod{}.
\begin{theorem}[\ourmethod~allows long-range interactions]
\label{thm:propagation}
Let us be given a \ourmethod{} block with autoregressive coefficients $\{\phi_i\}_{i=1}^{p} $. Assume the roots of the polynomial $P(\lambda)=\lambda^p - \sum_{j=1}^{p} \phi_j \lambda^{p-j} $ are all inside the unit circle.
Then, the closer the roots $P(\lambda)$ are to the unit circle, the longer the range propagation of the linear SSM corresponding to such a \ourmethod{} block.
\end{theorem}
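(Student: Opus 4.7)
The plan is to leverage \Cref{thm:equivalence} to recast the autoregressive part of a \ourmethod{} block as a linear SSM whose state matrix $\bfA$ is, in companion form, associated with the polynomial $P(\lambda) = \lambda^p - \sum_{j=1}^{p} \phi_j \lambda^{p-j}$. By the standard companion-matrix construction, the eigenvalues of $\bfA$ are exactly the roots of $P(\lambda)$, so its spectral radius equals the maximum root modulus, $\rho(\bfA) = \max_i |\lambda_i|$. The moving average part contributes a nilpotent block whose eigenvalues are $0$, so it does not affect $\rho(\bfA)$. Unrolling the recurrence $\mathbf{X}^{(L)} = \bfA \mathbf{X}^{(L-1)} + \bfB \bfdeltaSmall^{(L)}$ by $k$ steps gives
\begin{equation*}
\mathbf{X}^{(L+k)} = \bfA^{k+1} \mathbf{X}^{(L-1)} + \sum_{j=0}^{k} \bfA^{k-j} \bfB \bfdeltaSmall^{(L+j)},
\end{equation*}
so both the sensitivity of $\mathbf{X}^{(L+k)}$ to the initial state $\mathbf{X}^{(L-1)}$ (which concatenates $\bfF^{(0)}$ and $\bfDelta^{(0)}$) and the influence of the earliest residuals are governed by $\|\bfA^{k+1}\|$.

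Next, I would quantify how $\|\bfA^k\|$ decays as a function of $\rho(\bfA)$. Since by hypothesis $\rho(\bfA) < 1$, Gelfand's spectral-radius formula yields $\lim_{k\to\infty} \|\bfA^k\|^{1/k} = \rho(\bfA)$, and passing through the Jordan normal form sharpens this to the estimate $\|\bfA^k\| \leq C\, k^{m-1} \rho(\bfA)^k$, where $m$ denotes the size of the largest Jordan block associated with a root of $P(\lambda)$ of maximum modulus. Defining the effective propagation range as the largest $k$ for which the contribution of an early sequence element to $\mathbf{X}^{(L+k)}$ remains above a fixed threshold $\epsilon$, this estimate gives $k \sim \log(1/\epsilon)/\log(1/\rho(\bfA))$, which is monotonically increasing in $\rho(\bfA)$ and diverges as $\rho(\bfA) \to 1^{-}$. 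Pushing the roots of $P(\lambda)$ toward the unit circle is precisely what drives $\rho(\bfA)$ upward, so the range increases accordingly.

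The main obstacle I anticipate is not analytical \textemdash{} the Jordan-form and Gelfand estimates are classical \textemdash{} but definitional: the statement ``longer range propagation'' is informal, and I would need to commit to a precise notion before the theorem can be proven. The cleanest choice is the $\epsilon$-effective range above (equivalently, the exponential decay time constant $1/\log(1/\rho(\bfA))$), which I would state explicitly at the start of the proof so that the conclusion reduces to the monotonicity of this quantity in $\rho(\bfA)$. A secondary subtlety is the polynomial prefactor $k^{m-1}$ appearing when $P(\lambda)$ has repeated roots near the unit circle: although this term can transiently boost $\|\bfA^k\|$, the exponential decay rate $\rho(\bfA)$ still dominates asymptotically, so it only strengthens the intended conclusion rather than undermining it.
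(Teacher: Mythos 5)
Your proposal is correct and follows essentially the same route as the paper: unroll the linear recurrence, observe that the influence of early inputs on the state is carried by $\bfA^{k}\bfB$, identify the nonzero eigenvalues of the companion-form state matrix with the roots of $P(\lambda)$ (the MA block contributing only zero eigenvalues), and conclude that the decay rate of $\|\bfA^{k}\|$ --- hence the propagation range --- is governed by the maximum root modulus. Your version is in fact somewhat more quantitative than the paper's (the explicit Jordan-form bound $\|\bfA^k\| \leq C\,k^{m-1}\rho(\bfA)^k$ and the $\epsilon$-effective-range definition make precise the informal ``longer range propagation'' that the paper leaves implicit), but the underlying argument is the same.
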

The results derived in this section provide the theoretical foundation and motivation for the employment of \ourmethod{} as a method to address the oversquashing phenomenon in GNNs, and to enhance long-range interaction modeling capabilities, as we show in our experiments in \Cref{sec:experiments}.

\section{Experiments}\label{sec:experiments}
We present the empirical performance of our \ourmethod{} \rebuttal{on a suite of benchmarks similar to previous graph SSM studies.} Specifically, we show the efficacy in performing long-range propagation, thereby mitigating oversquashing. 
To this end, we evaluate \ourmethod{} on a graph transfer task inspired by \cite{diGiovanniOversquashing} in Section~\ref{sec:exp_transfer}. In a similar spirit, we assess \ourmethod{} on synthetic benchmarks that require the exchange of messages at large distances over the graph, called graph property prediction from \citet{gravina_adgn}, in \Cref{sec:exp_gpp}. We also verify \ourmethod{} on real-world datasets, including the long-range graph benchmark \citep{LRGB} in Section~\ref{sec:exp_lrgb}, and additional GNN benchmarks in Section~\ref{sec:exp_gnn_benchmark}, where we consider MalNet-Tiny \citep{malnet} and the heterophilic node classification datasets from \citet{platonov2023a} \rebuttal{, and in Appendix~\ref{app:additional_benchmarks} where we consider ZINC-12k, OGBG-MOLHIV, Cora, CiteSeer, and Pubmed.} In Appendix~\ref{app:complexity_runtimes}, we discuss the complexity and runtimes of \ourmethod{}\rebuttal{, and compare  with other methods}. In Appendix~\ref{app:complete_res}, we report ablation studies and additional comparisons to provide a comprehensive understanding of our \ourmethod{}\rebuttal{, including an evaluation on temporal setting (see Appendix~\ref{app:exp_dyn_graph})}.
Notably, the performance of \ourmethod{} is compared with popular and state-of-the-art methods, such as MPNN-based models, DE-GNNs, higher-order DGNs, and graph transformers, and shows consistent improvements over its baseline models, with competitive results to state-of-the-art methods. \rebuttal{We note that, in the main text, we report models and variants that are state-of-the-art on the individual benchmarks, which may lead to differences between the tables, while more variants are explored in the appendix.} Additional details on baseline methods are presented in Appendix~\ref{app:baselines_details}, and the explored grid of hyperparameters in Appendix \ref{app:hyperparams}.

\subsection{Graph Feature Transfer}\label{sec:exp_transfer}
\textbf{Setup.} We consider three graph feature transfer tasks based on \cite{diGiovanniOversquashing}. The objective
is to transfer a label from a source to a target node, placed at a distance $\ell$ in the graph. By increasing $\ell$, we increase the complexity of the task and require longer-range information. Moreover, due to oversquashing, the performance is expected to degrade as $\ell$ increases. We initialize nodes with a random valued feature, and we assign values ``1" and ``0" to source and target nodes, respectively. We consider three graph distributions, \ie line, ring, crossed-ring, and four different distances $\ell = \{3, 5, 10, 50\}$. Appendix \ref{app:details_transfer_task} provides additional details about the dataset and the task.

\textbf{Results.} Figure~\ref{fig:graph_transfer} reports the test mean-squared error (and standard deviation) of \ourmethod{} compared to well-known models from the literature. Results show that traditional MPNNs (GCN, GAT, GraphSAGE, and GIN) struggle to propagate information effectively over long distances, with their performance deteriorating significantly as the source-target distance $\ell$ increases. This is evident across all graph types. 
In contrast, \ourmethod{} \rebuttal{coupled with GCN} 
achieves a low error even when the source-target distance is 50. 
Among the models, A-DGN and GPS come closest to \ourmethod{} performance, as they are a non-dissipative approach and a transformer-based model, respectively. However, \ourmethod{} still outperforms all baselines across all graph structures, especially as the propagation distance increases, thereby offering solid empirical evidence of its ability to transfer information across long distances, as supported by our theoretical understanding from \Cref{sec:properties}.

\begin{figure}[t]
    \centering
    \includegraphics[width=0.9\textwidth]{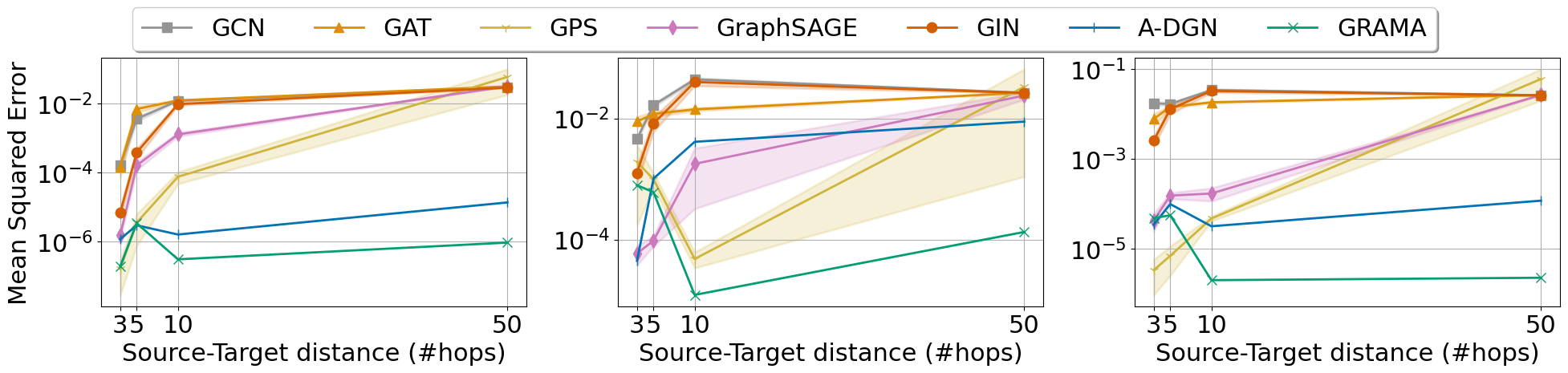}
    {\\\footnotesize \hspace{10mm}(a) Line \hspace{30mm}(b) Ring \hspace{27mm}(c) Crossed-Ring}
    \caption{Feature transfer performance on (a) Line, (b) Ring, and (c) Crossed-Ring graphs.
    }
    \label{fig:graph_transfer}
\end{figure}

\subsection{Graph Property Prediction}\label{sec:exp_gpp}
\begin{wraptable}{r}{8cm}
\setlength{\tabcolsep}{1pt}
\centering
\vspace{-4.5mm}

\caption{Mean test set {\small$log_{10}(\mathrm{MSE})$}($\downarrow$) and std averaged on 4 random weight initializations on Graph Property Prediction tasks. The lower, the better. 
\one{First}, \two{second}, and \three{third} best results for each task are color-coded; we consider only the best configuration of \ourmethod{} for coloring purposes.
}
\label{tab:results_gpp}
\scriptsize
\begin{tabular}{lccc}
\hline\toprule
\textbf{Model} &\textbf{Diameter} & \textbf{SSSP} & \textbf{Eccentricity} \\\midrule
\textbf{MPNNs} \\
$\,$ \rebuttal{GatedGCN} & \rebuttal{0.1348$_{\pm 0.0397}$} & \rebuttal{-3.2610$_{\pm 0.0514}$} & \rebuttal{0.6995$_{\pm 0.0302}$} \\
$\,$ GCN            & 0.7424$_{\pm0.0466}$ & 0.9499$_{\pm9.18\cdot10^{-5}}$ & 0.8468$_{\pm0.0028}$ \\
$\,$ GAT            & 0.8221$_{\pm0.0752}$ & 0.6951$_{\pm0.1499}$           & 0.7909$_{\pm0.0222}$  \\
$\,$ GraphSAGE      & 0.8645$_{\pm0.0401}$ & 0.2863$_{\pm0.1843}$           &  0.7863$_{\pm0.0207}$\\
$\,$ GIN            & 0.6131$_{\pm0.0990}$ & -0.5408$_{\pm0.4193}$          & 0.9504$_{\pm0.0007}$\\
$\,$  GCNII          & 0.5287$_{\pm0.0570}$ & -1.1329$_{\pm0.0135}$          & 0.7640$_{\pm0.0355}$\\
$\,$ ARMA & 0.7819$_{\pm0.4729}$ &  0.0432$_{\pm0.0981}$  &  0.2605$_{\pm0.0610}$\\
\midrule
\textbf{DE-GNNs} \\
$\,$ DGC            & 0.6028$_{\pm0.0050}$ & -0.1483$_{\pm0.0231}$          & 0.8261$_{\pm0.0032}$\\
$\,$ GRAND          & 0.6715$_{\pm0.0490}$ & -0.0942$_{\pm0.3897}$          & 0.6602$_{\pm0.1393}$ \\
$\,$ GraphCON       & 0.0964$_{\pm0.0620}$ & -1.3836$_{\pm0.0092}$ & 0.6833$_{\pm0.0074}$\\
$\,$ A-DGN
& \three{-0.5188$_{\pm0.1812}$} & 
-3.2417$_{\pm0.0751}$ & \three{0.4296$_{\pm0.1003}$}  \\
$\,$ {SWAN} & \two{-0.5981$_{\pm0.1145}$}  & \three{-3.5425$_{\pm0.0830}$}  & \two{-0.0739$_{\pm0.2190}$} \\
\midrule
\textbf{Graph Transformers} \\
$\,$ GPS & -0.5121$_{\pm0.0426}$ &  \two{-3.5990$_{\pm0.1949}$}  & 0.6077$_{\pm0.0282}$\\
\midrule
\textbf{Our} \\
$\,$ \ourmethod$_{\textsc{GCN}}$ & 0.2577$_{\pm0.0368}$ & 0.0095$_{\pm0.0877}$ &   0.6193$_{\pm0.0441}$ \\ 
 $\,$ \rebuttal{$\ourmethod_{\textsc{GatedGCN}}$} & \rebuttal{-0.5485$_{\pm0.1489}$}      &     \one{-4.1289$_{\pm0.0988}$}        &   \rebuttal{0.5523$_{\pm0.0511}$}\\
$\,$ \ourmethod$_{\textsc{GPS}}$ & \one{-0.8663$_{\pm0.0514}$} & {-3.9349$_{\pm0.0699}$} & \one{-1.3012$_{\pm0.1258}$}\\ 
\bottomrule\hline      
\end{tabular}

\vspace{-12mm}
\end{wraptable}
\textbf{Setup.} We consider the three graph property prediction tasks presented in \cite{gravina_adgn}, 
investigating the performance of \ourmethod{} in predicting graph diameters, single source shortest paths (SSSP), 
and node eccentricity on synthetic graphs. To effectively address these tasks, it is essential to propagate information not only from direct neighbors but also from distant nodes within the graph. As a result, strong performance in these tasks mirrors the ability to facilitate long-range interactions. We provide more details on the setup and task in Appendix \ref{app:details_gpp}. \rebuttal{For the GPS results, we use a basic GPS with no additional components (\eg encodings), to quantify the contribution of \ourmethod{}.}

\textbf{Results.}
Table \ref{tab:results_gpp} reports the mean test $log_{10}(\mathrm{MSE})$, comparing our \ourmethod{} with various MPNNs, DE-GNNs, and transformer-based models. The results highlight that \ourmethod{}$_{\textsc{GPS}}$ consistently achieves the best performance across all tasks, demonstrating significant improvements over baseline models. For example, in the Eccentricity task, \ourmethod{}$_{\textsc{GPS}}$ reduces the error score by over 1.2 points compared to SWAN and by over 1.7 points compared to A-DGN, which are models designed to propagate information over long radii effectively. Compared to ARMA \citep{bianchi2019graph}, our method demonstrates an average improvement of 2.4 points, highlighting the empirical difference between the methods, besides their major qualitative differences. 

\begin{wraptable}{r}{6.5cm}
\vspace{-5mm}
\centering
\caption{Results for Peptides-func and Peptides-struct averaged over 3 training seeds. Baselines are taken from \cite{LRGB} and \cite{drew}.  All MPNN-based methods include structural and positional encoding.
The \one{first}, \two{second}, and \three{third} best scores are colored, and we color only the best configuration of \ourmethod{}.
}\label{tab:results_lrgb}
\scriptsize
\begin{tabular}{@{}lcc@{}}
\hline\toprule
\multirow{2}{*}{\textbf{Model}} & \textbf{Peptides-func}  & \textbf{Peptides-struct}              
\\
& \scriptsize{AP $\uparrow$} & \scriptsize{MAE $\downarrow$} 
\\ \midrule  
\textbf{MPNNs} \\
$\,$ GCN                        & 59.30$_{\pm0.23}$ & 0.3496$_{\pm0.0013}$ \\ 
$\,$ GatedGCN                   & 58.64$_{\pm0.77}$ & 0.3420$_{\pm0.0013}$ \\ 
$\,$ ARMA 
& 64.08$_{\pm0.62}$ & 0.2709$_{\pm0.0016}$  \\

\midrule
\textbf{Multi-hop GNNs}\\
$\,$ DIGL+MPNN+LapPE     & 68.30$_{\pm0.26}$         & 0.2616$_{\pm0.0018}$ \\ 
$\,$ MixHop-GCN+LapPE    & 68.43$_{\pm0.49}$         & 0.2614$_{\pm0.0023}$ \\ 
$\,$ DRew-GCN+LapPE             & \one{71.50$_{\pm0.44}$}   & 0.2536$_{\pm0.0015}$ \\ 
$\,$ DRew-GatedGCN+LapPE & 69.77$_{\pm0.26}$         & 0.2539$_{\pm0.0007}$ \\ 
\midrule
\textbf{Graph Transformers} \\
$\,$ Transformer+LapPE & 63.26$_{\pm1.26}$ & 0.2529$_{\pm0.0016}$           \\ 
$\,$ SAN+LapPE         & 63.84$_{\pm1.21}$ & 0.2683$_{\pm0.0043}$           \\ 
$\,$ GraphGPS+LapPE    & 65.35$_{\pm0.41}$ & 0.2500$_{\pm0.0005}$   \\ 
\midrule
\textbf{DE-GNNs} \\
$\,$ GRAND      & 57.89$_{\pm0.62}$ & 0.3418$_{\pm0.0015}$ \\ 
$\,$ GraphCON   & 60.22$_{\pm0.68}$ & 0.2778$_{\pm0.0018}$ \\ 
$\,$ A-DGN      & 59.75$_{\pm0.44}$ & 0.2874$_{\pm0.0021}$ \\
$\,$ {SWAN} & 67.51$_{\pm0.39}$ & 0.2485$_{\pm0.0009}$ \\
\midrule
\textbf{Graph SSMs} \\
$\,$ Graph-Mamba & 67.39$_{\pm0.87}$ &  \three{0.2478$_{\pm0.0016}$} \\
$\,$ GMN & \three{70.71$_{\pm0.83}$} &  \two{0.2473$_{\pm0.0025}$} \\
\midrule

\textbf{Ours} \\
$\,$ \ourmethod$_{\textsc{GCN}}$ & \two{70.93$_{\pm0.78}$} & {0.2439$_{\pm0.0017}$}  \\
$\,$ \ourmethod$_{\textsc{GatedGCN}}$ & 70.49$_{\pm0.51}$  & {0.2459$_{\pm 0.0020}$}  \\
$\,$ \ourmethod$_{\textsc{GPS}}$ & 69.83$_{\pm0.83}$  & \one{0.2436$_{\pm0.0022}$}  \\

\bottomrule\hline
\end{tabular}

\vspace{-12mm}
\end{wraptable}
Overall, these results further validate the effectiveness of our \ourmethod{} in modeling long-range interactions and mitigating oversquashing.
Furthermore, \ourmethod{} not only surpasses strong models like GPS, but also strengthens the performance of simple MPNN backbones like GCN. For example, GCN augmented with our \ourmethod{}  consistently delivers better results than the baseline GCN, highlighting its ability to enhance traditional message-passing frameworks. This demonstrates that our method can effectively leverage the strengths of simple models while overcoming their limitations in long-range propagation. 

\begin{wraptable}{r}{6.5cm}
\vspace{-10mm}
\centering
\caption{Mean test 
accuracy and std averaged over 4 random weight initializations on MalNet-Tiny. The higher, the better. 
\one{First}, \two{second}, and \three{third}  best results
. Baselines 
from \cite{wang2024mamba, behrouz2024graphmamba} \rebuttal{include Laplacian positional encodings.}
}
\label{tab:results_malnet}
\scriptsize
\begin{tabular}{lccc}
\hline\toprule
\multirow{2}{*}{\textbf{Model}} & \textbf{MalNet-Tiny}\\
& \small{Acc $\uparrow$}\\
\midrule
\textbf{MPNNs} \\
$\,$ GCN & 81.00 \\
$\,$ GIN & 88.98$_{\pm0.55}$\\
$\,$ GatedGCN & 92.23$_{\pm0.65}$\\
$\,$ ARMA & 91.80$_{\pm0.72}$ \\ 
\midrule
\multicolumn{2}{l}{\textbf{Graph Transformers}} \\
$\,$ GPS+Transformer & OOM\\
$\,$ GPS+Performer &  92.64$_{\pm0.78}$\\
$\,$ GPS+BigBird & 92.34$_{\pm0.34}$\\
$\,$ Exphormer & \two{94.22$_{\pm0.24}$}\\
\midrule
\textbf{Graph SSMs} \\
$\,$ Graph-Mamba & 93.40$_{\pm 0.27}$\\
$\,$ GMN & \three{94.15}\\
\midrule
\textbf{Ours} \\
$\,$ \ourmethod$_{\textsc{GCN}}$ & 93.43$_{\pm0.29}$ \\
$\,$ \ourmethod$_{\textsc{GatedGCN}}$ & 93.66$_{\pm0.40}$ \\
$\,$ \ourmethod$_{\textsc{GPS}}$ & \one{94.37$_{\pm0.36}$} \\

\bottomrule\hline      
\end{tabular}
\vspace{-17mm}
\end{wraptable}

\subsection{
Long-Range Benchmark}\label{sec:exp_lrgb}
\textbf{Setup.} We assess the performance of our method on the real-world long-range graph benchmark (LRGB) from \cite{LRGB}, focusing on the \textit{Peptides-func} and \textit{Peptides-struct} datasets. 
We follow the experimental setting in \cite{LRGB}, \rebuttal{including the 500K parameter budget. All transformer baselines include Laplacian positional encodings, for a fair evaluation. Our \ourmethod{} does not use additional encodings.} The datasets consist of large molecular graphs derived from peptides, 
where the structure and function of a peptide depend on interactions between distant parts of the graph. Therefore, relying on short-range interactions, such as those captured by local message passing in GNNs, 
may not be insufficient to excel at this task. More details on the setup and tasks can be found in Appendix \ref{app:details_lrgb}.

\textbf{Results.} Table \ref{tab:results_lrgb} provides a comparison of our \ourmethod{} model with a wide range of baselines. A broader comparison 
is presented in Table \ref{tab:results_lrgb_complete}.
The results indicate that \ourmethod{} outperforms standard MPNNs, transformer-based GNNs, DE-GNNs, SSM-based GNNs, and most Multi-hop GNNs. Such a result highlights the competitiveness of our method and its ability to propagate information effectively. 
Moreover, its superiority with respect to Graph SSMs, emphasizes the strength of \ourmethod{} in modeling long-range interactions while maintaining permutation equivariance and processing sequences that go beyond pairwise interactions.
Similarly to Section~\ref{sec:exp_gpp}, our results show that \ourmethod{} strengthens the abilities of simple GNN backbones.
Specifically, our method boosts MPNNs like GCN and GatedGCN by more than 11 AP points on the Peptide-func task.

\subsection{GNN Benchmarks}\label{sec:exp_gnn_benchmark}
\textbf{Setup.} To further evaluate the performance of our \ourmethod{}, we consider multiple GNN benchmarks, including \textit{MalNet-Tiny} \citep{malnet} and the five heterophilic tasks introduced in \cite{platonov2023a}. Specifically, MalNet-Tiny consists of relatively large graphs (with thousands of nodes) representing function call graphs from malicious and benign software, where nodes represent functions and edges represent calls between them. 
Considering the scale of the graphs and the fact that malware can often exhibit non-local behavior, we believe this task can further reinforce the idea that \ourmethod{} can preserve and leverage long-range interactions between nodes.

In the heterophilic node classification setting, we consider \textit{Roman-empire, \textit{Amazon-ratings}, \textit{Minesweeper}, \textit{Tolokers}, and \textit{Questions}} tasks, 
to show the efficacy of our method in capturing more
complex node relationships beyond simple homophily settings.
We consider the experimental setting from \cite{malnet} for MalNet-Tiny, and that from \cite{platonov2023a} for heterophilic tasks. 
\begin{wraptable}{r}{10cm}
\setlength{\tabcolsep}{2pt}
\centering
\caption{Node classification mean test set score and standard-deviation using splits from \citet{platonov2023a} on heterophilic datasets. The higher, the better. 
\one{First}, \two{second}, and \three{third}  best results for each task are color-coded, and we consider only the best configuration of \ourmethod{} for color purposes.
}
\label{tab:results_hetero}
\scriptsize
\begin{tabular}{lccccc}
\hline\toprule
\multirow{2}{*}{\textbf{Model}} & \textbf{Roman-empire} & \textbf{Amazon-ratings} & \textbf{Minesweeper} & \textbf{Tolokers} & \textbf{Questions}\\
& \scriptsize{Acc $\uparrow$} & \scriptsize{Acc $\uparrow$} & \scriptsize{AUC $\uparrow$} & \scriptsize{AUC $\uparrow$} & \scriptsize{AUC $\uparrow$}\\
\midrule
\textbf{MPNNs} \\
$\,$ GCN         & 73.69$_{\pm0.74}$ & 48.70$_{\pm0.63}$ & 89.75$_{\pm0.52}$ & 83.64$_{\pm0.67}$ & 76.09$_{\pm1.27}$\\
$\,$ SAGE        & 85.74$_{\pm0.67}$ & 53.63$_{\pm0.39}$ & \three{93.51$_{\pm0.57}$} & 82.43$_{\pm0.44}$ & 76.44$_{\pm0.62}$\\
$\,$ GAT         & 80.87$_{\pm0.30}$ & 49.09$_{\pm0.63}$ & 92.01$_{\pm0.68}$ & 83.70$_{\pm0.47}$ & 77.43$_{\pm1.20}$\\
$\,$ GatedGCN & 74.46$_{\pm0.54}$ & 43.00$_{\pm0.32}$ &  87.54$_{\pm1.22}$ &  77.31$_{\pm1.14}$ & \rebuttal{76.61$_{\pm{1.13}}$}\\
$\,$ CO-GNN($\mu$, $\mu$) & \two{91.37$_{\pm0.35}$} &  \one{54.17$_{\pm0.37}$} &  \two{97.31$_{\pm0.41}$} &  \three{84.45$_{\pm1.17}$} & 76.54$_{\pm0.95}$ \\
$\,$ ARMA & 87.11$_{\pm{0.38}}$ & 49.94$_{\pm0.30}$ & 91.64$_{\pm1.21}$ & 82.29$_{\pm0.97}$ & 77.75$_{\pm0.85}$ \\
\midrule
\textbf{Graph Transformers} \\
$\,$ NAGphormer  & 74.34$_{\pm0.77}$  &  51.26$_{\pm0.72}$  &  84.19$_{\pm0.66}$  &  78.32$_{\pm0.95}$ & 68.17$_{\pm1.53}$\\
$\,$ Exphormer   & \three{89.03$_{\pm0.37}$}  &  53.51$_{\pm0.46}$  &  90.74$_{\pm0.53}$  &  83.77$_{\pm0.78}$ & 73.94$_{\pm1.06}$\\
$\,$ GT-sep      & 87.32$_{\pm0.39}$ & 52.18$_{\pm0.80}$ & 92.29$_{\pm0.47}$ & 82.52$_{\pm0.92}$ & \three{78.05$_{\pm0.93}$}\\
$\,$ GPS         & 82.00$_{\pm0.61}$  &  53.10$_{\pm0.42}$  &  90.63$_{\pm0.67}$  &  83.71$_{\pm0.48}$ & 71.73$_{\pm1.47}$\\
\midrule
\multicolumn{4}{l}{\textbf{Heterophily-Designated GNNs}} \\
$\,$ H2GCN       & 60.11$_{\pm0.52}$ & 36.47$_{\pm0.23}$ & 89.71$_{\pm0.31}$ & 73.35$_{\pm1.01}$ & 63.59$_{\pm1.46}$\\
$\,$ FSGNN       & 79.92$_{\pm0.56}$ & 52.74$_{\pm0.83}$ & 90.08$_{\pm0.70}$ & 82.76$_{\pm0.61}$ & \two{78.86$_{\pm0.92}$}\\
$\,$ GBK-GNN     & 74.57$_{\pm0.47}$ & 45.98$_{\pm0.71}$ & 90.85$_{\pm0.58}$ & 81.01$_{\pm0.67}$ & 74.47$_{\pm0.86}$\\
\midrule
\textbf{Graph SSMs} \\
$\,$ GPS + Mamba  & 83.10$_{\pm0.28}$  &  45.13$_{\pm0.97}$  &  89.93$_{\pm0.54}$  &  83.70$_{\pm1.05}$ & --\\
$\,$ GMN          & 87.69$_{\pm0.50}$  &  \two{54.07$_{\pm0.31}$}  &  91.01$_{\pm0.23}$  &  \two{84.52$_{\pm0.21}$} & --\\
\midrule
\textbf{Ours} \\
$\,$ \ourmethod$_{\textsc{GCN}}$ & 88.61$_{\pm0.43}$ & 53.48$_{\pm0.62}$ & 95.27$_{\pm0.71}$  & \one{86.23$_{\pm1.10}$} & {79.23$_{\pm1.16}$} \\
$\,$ \ourmethod$_{\textsc{GatedGCN}}$ & \one{91.82$_{\pm0.39}$} & \three{53.71$_{\pm0.57}$} & {98.19$_{\pm0.58}$} & {85.42$_{\pm0.95}$} & \one{80.47$_{\pm1.09}$}\\
$\,$ \ourmethod$_{\textsc{GPS}}$ & {91.73$_{\pm0.59}$} & 53.36$_{\pm0.38}$ & \one{98.33$_{\pm0.55}$} & {85.71$_{\pm0.98}$} & {79.11$_{\pm1.19}$} \\

\bottomrule\hline      
\end{tabular}
\vspace{-5mm}
\end{wraptable}

Additional details on the setup and tasks are in Appendix \ref{app:details_gnn_bench}.

\textbf{Results.} Table~\ref{tab:results_malnet} reports the mean test set accuracy on MalNet-Tiny, while Table~\ref{tab:results_hetero} reports the test score for the heterophilic tasks. For a more in-depth comparison of the heterophilic setting, we refer the reader to Table \ref{tab:results_hetero_complete}. 
Among all models and tasks, \ourmethod{} achieves competitive overall performance that often outperforms state-of-the-art methods, demonstrating that our model not only excels at handling larger graphs than those considered in previous experiments but also under complex heterophilic scenarios. The results underscore the ability of \ourmethod{} to capture the dependencies characterizing malware detection tasks where non-local behaviors are often prevalent. Overall, these findings confirm that \ourmethod{} is a competitive and effective solution, even when compared to state-of-the-art models like Graph Transformers and recent Graph SSM models.

\vspace{-1mm}
\section{Conclusion}
\label{sec:conclusion}
We introduced \ourmethod, a novel sequence-based framework that enhances the long-range interaction modeling ability and feature update selectivity of Graph Neural Networks (GNNs) through the integration of adaptive neural Autoregressive Moving Average (ARMA) models with potentially any GNN backbone. We draw a theoretical link between SSM models and \ourmethod, to build solid groundwork and understanding of the qualitative behavior of \ourmethod. Compared with several existing Graph SSMs, our \ourmethod~allows to benefit from long-range interaction modeling abilities, while maintaining permutation equivariance.
Through a series of extensive experiments on 14 synthetic and real-world datasets, we demonstrated that \ourmethod~consistently offers competitive performance with well-established baseline models, from classical MPNNs to more complex approaches such as Graph Transformers and Graph SSMs. 
Overall, \ourmethod~offers a theoretically grounded and powerful, flexible solution that bridges the gap between contemporary sequential models and existing graph learning methods, stepping forward towards a new family of graph machine learning models.

\bibliography{iclr2025_conference}
\bibliographystyle{iclr2025_conference}

\appendix
\clearpage
\section{Additional Related Work}
\label{app:additional_related_work}
\textbf{GNNs based on Differential Equations.} 
Building on the interpretation of convolutional neural networks (CNNs) as discretizations of ODEs and PDEs \citep{RuthottoHaber2018, chen2018neural, zhang2019linked}, several works, including GCDE \citep{poli}, GODE \citep{zhuang2020ordinary}, and GRAND \citep{chamberlain2021grand}, among others, view GNN layers as discretized steps of the heat equation. This framework helps manage diffusion (smoothing) and sheds light on the oversmoothing problem in GNNs \citep{nt2019revisiting,Oono2020Graph,cai2020note}. In contrast, \citet{pmlr-v162-choromanski22a} introduced an attention mechanism based on the heat diffusion kernel. Other models, such as PDE-GCN\textsubscript{M} \citep{eliasof2021pde} and GraphCON \citep{rusch2022graph}, combine diffusion with oscillatory processes to maintain feature energy. Recent work has explored anti-symmetry \citep{gravina_adgn}, reaction-diffusion dynamics \citep{wang2022acmp, choi2022gread}, convection-diffusion \citep{zhao2023graphConvection}, and fractional Laplacian ODEs \citep{maskey2023fractional}. While most focus on spatial aggregation in DE-GNNs, temporal aspects are also addressed in \citep{eliasof2024temporal, gravina2024temporal, kang2024unleashing}. Overall, we refer to this family of models as DE-GNNs. These models are related to SSM models, which are also based on ODEs. Also, some of the DE-GNNs were shown to be effective against oversquashing as architectures, and therefore we include them in our experimental comparisons.

\textbf{Multi-hop GNNs.} Multi-hop GNN architectures were extensively studied in previous years, leading to several popular architectures such as JK-Net \citep{jknet}, MixHop \citep{abu2019mixhop}, and more recently DRew \citep{drew}. These works take inspiration from earlier works like DenseNets \citep{huang2017densely}, where the main idea is to consider a combination of feature maps from multiple layers, instead of only considering the last layer feature map as in ResNets \citep{he2016deep}. We now distinguish our \ourmethod~from JK-Net , MixHop, and DRew. First, these methods do not stem from a dynamical system perspective that allows the construction of models like ARMA or SSM. Second, methods like JK-Net can become computationally expensive if many layers are used within a network, as it considers all previous layers, and it is only used within the final layer in a GNN, rather than an architecture that considers multiple past values at each layer of the network. Third, in \ourmethod~we propose a selective attention mechanism to ARMA coefficients, as described in \Cref{sec:learningCoefficients}.

\section{Proofs}
\label{app:proofs}
We now provide proof to all Theorems and Lemmas shown in the paper. 
Without loss of generality, we analyze \ourmethod~in the case of a single channel. However, note that the ARMA coefficients are shared among channels. Therefore, in the case of multiple input channels, the proof is trivially extended by applying the same ARMA system to each channel independently. 
Moreover, we will state our theoretical results considering general sequences indexed with $t$, which in particular can be thought of as neural sequences of \ourmethod{}, but, for the sake of simplicity, without involving the hyperparameters $R$ and $S$, and focusing on the dynamics of a single \ourmethod{} block. 

\subsection{Proof of \Cref{thm:equivalence}}
\begin{proof}
We start by recapping the definition of the ARMA and linear SSM models. Then, we show how to derive an SSM representation of an ARMA model, and vice versa, an ARMA model from a linear SSM.

\textbf{ARMA Models.} 
The ARMA$(p, q)$ model for a univariate time series is given by:
\begin{equation}
\label{eq:armadefinition_proof}    f_t = \phi_1 f_{t-1} + \phi_2 f_{t-2} + \ldots + \phi_p f_{t-p} + \delta_t + \theta_1 \delta_{t-1} + \theta_2 \delta_{t-2} + \ldots + \theta_q \delta_{t-q}, \end{equation}
where
$\{\phi_i\}_{i=1}^{p}$ are the autoregressive coefficients, and
$\{\theta_j\}_{j=1}^{q}$ are the moving average coefficients.

\textbf{State Space Model (SSM):}
A linear SSM system mapping univariate input, $\delta_t$, into univariate output, $f_t$, is defined by the following equations:

\begin{subequations}
\label{eq:ssm}
\begin{equation}   
\label{eq:ssm_a}
    \mathbf{x}_t = \mathbf{A} \mathbf{x}_{t-1} + \mathbf{B} \delta_t.
        \end{equation}
\begin{equation}
\label{eq:ssm_b}
    f_t = \mathbf{C} \mathbf{x}_t + \mathbf{D} \delta_t,
\end{equation}
\end{subequations}
where
 $\mathbf{x}_t$ is the hidden state vector at time step $t$, $\mathbf{A}$ is the state transition matrix,  $\mathbf{B}$ is the control-input matrix, $\mathbf{C}$ is the observation matrix, and $\mathbf{D}$ is the direct transition matrix.

\textbf{Proof of ARMA $\rightarrow$ SSM.}
Given an ARMA(\(p, q\)) model, we can rewrite it in an SSM form by defining a state vector $\mathbf{x}_t$ that includes past autoregressive values and past residuals:
\begin{equation}
    \mathbf{x}_t = 
    \begin{bmatrix}
    f_t &
    f_{t-1} &
    \ldots &
    f_{t-p+1} &
    \delta_t &
    \delta_{t-1} &
    \ldots &
    \delta_{t-q+1}
    \end{bmatrix}^{\top}
\end{equation}

and define the SSM matrices $\bfA,\bfB, \bfC,\bfD$, as follows:
\begin{equation}
\label{eq:state_matrix}
    \mathbf{A} = 
    \left[\begin{array}{ccccc|ccccc} 
    \phi_1 & \phi_2 & \ldots & \phi_{p-1} & \phi_p & \theta_1 & \theta_2 & \ldots & \theta_{q-1} & \theta_q \\
    1 & 0 & \ldots & 0 & 0 & 0 & 0 & \ldots & 0 & 0 \\
    0 & 1 & \ldots & 0 & 0 & 0 & 0 & \ldots & 0 & 0 \\
    \vdots & \vdots & \ddots & \vdots & \vdots & \vdots & \vdots & \ddots & \vdots & \vdots \\
    0 & 0 & \ldots & 1 & 0 & 0 & 0 & \ldots & 0 & 0 \\
    \hline 
    0 & 0 & \ldots & 0 & 0 & 0 & 0 & \ldots & 0 & 0 \\
    0 & 0 & \ldots & 0 & 0 & 1 & 0 & \ldots & 0 & 0 \\
    0 & 0 & \ldots & 0 & 0 & 0 & 1 & \ldots & 0 & 0 \\
    \vdots & \vdots & \ddots & \vdots & \vdots & \vdots & \vdots & \ddots & \vdots & \vdots \\
    0 & 0 & \ldots & 0 & 0 & 0 & 0 & \ldots & 1 & 0 
    \end{array}\right]
\end{equation}
\begin{equation}
    \mathbf{B} = 
    \begin{bmatrix}
    0 &
    \ldots &
    0 &
    | &
    1 &
    0 &
    \ldots &
    0
    \end{bmatrix}^{\top}
\end{equation}
\begin{equation}
    \mathbf{C} = 
    \begin{bmatrix}
    1 & 0 & \ldots & 0 & 0
    \end{bmatrix}, \quad
    \mathbf{D} = 
    \begin{bmatrix}
    1
    \end{bmatrix}
\end{equation}

Using these definitions, the obtained state space model representation is equivalent to the operation of the ARMA model of \Cref{eq:armadefinition_proof}.

\textbf{SSM $\rightarrow$ ARMA.}
Let us assume the hidden state dimension to be $p$, so that $ \bfA \in \mathbb{R}^{p\times p}, 
\bfB \in \mathbb{R}^{p\times 1}, 
\bfC \in \mathbb{R}^{1\times p}, 
\bfD \in \mathbb{R}^{1\times 1}$.
First, we recursively substitute the state equation into itself to express $\mathbf{x}_t $ in terms of past states and inputs.
Substituting \( \mathbf{x}_{t-1} \) into the \Cref{eq:ssm} yields:
\begin{align}
\mathbf{x}_t &= \mathbf{A} \mathbf{x}_{t-1} + \mathbf{B} \delta_{t} = \mathbf{A} (\mathbf{A} \mathbf{x}_{t-2} + \mathbf{B} \delta_{t-1}) + \mathbf{B} \delta_{t} \\ 
& = \mathbf{A}^2 \mathbf{x}_{t-2} + \mathbf{A} \mathbf{B} \delta_{t-1} + \mathbf{B} \delta_{t} \nonumber
\end{align}

Therefore, unfolding $t$ steps in the past, up to the initial condition $\bfx_0$, we get:
\begin{equation}
\label{eq:subsistute_ssm_firsteq}
\mathbf{x}_t = \mathbf{A}^t \mathbf{x}_0 + \sum_{k=0}^{t-1} \mathbf{A}^k \mathbf{B} \delta_{t-k} =  \mathbf{A}^t \mathbf{x}_0 +  \mathbf{B} \delta_{t} + \sum_{k=1}^{t-1} \mathbf{A}^k \mathbf{B} \delta_{t-k}
\end{equation}
Substituting the expression in \Cref{eq:subsistute_ssm_firsteq} to obtain the SSM output from \Cref{eq:ssm_b} yields:
\begin{align}
\label{eq:unrollSSM}
f_t &= \mathbf{C} \left( \mathbf{A}^t \mathbf{x}_0 +  \mathbf{B} \delta_{t} + \sum_{k=1}^{t-1} \mathbf{A}^k \mathbf{B} \delta_{t-k} \right) + \mathbf{D} \delta_t \\
&=  \mathbf{C} \mathbf{A}^t \mathbf{x}_0 + (\mathbf{C} \mathbf{B} + \mathbf{D})\delta_{t}  + \sum_{k=1}^{t-1} \mathbf{C} \mathbf{A}^k \mathbf{B} \delta_{t-k} \nonumber
\end{align}
The above equation describes an ARMA(p,q) model, where $p=t$, and $q=p-1$. In fact, once defined the initial condition as $\bfx_0 = [f_{p-1}, \ldots, f_0]^T$, the autoregressive coefficients can be found as the $p$ elements of the row vector $ \mathbf{C} \mathbf{A}^p \in \mathbb{R}^{1\times p} $.
While, the moving average coefficients are the $q$ real numbers defined as $\theta_k = \mathbf{C} \mathbf{A}^k \mathbf{B} $, for $k=1, \ldots, q$.
Finally, to get exactly the form of \Cref{eq:armadefinition_proof}, it suffices to impose that $ \mathbf{D} = 1 - \mathbf{C} \mathbf{B} $.
\end{proof}

\rebuttal{Another proof of the equivalence between ARMA and SSM can be found in \cite{de2004arma}. We developed our own version since it is more congenial to our discussion based on long-term propagation of the information on graphs.}

\subsection{Proof of Lemma  \ref{lem:stablity_ssm}}

The linear SSM corresponding to a \ourmethod{} block with autoregressive coefficients $\{\phi_i\}_{i=1}^{p} $ is stable if and only if the spectral radius of its state matrix is less than (or at most equal to) 1. In particular, this happens if and only if the polynomial $P(\lambda)=\lambda^p - \sum_{j=1}^{p} \phi_j \lambda^{p-j} $ has all its roots inside (or at most on) the unit circle.

\begin{proof}
We proved in \Cref{thm:equivalence} that a \ourmethod{} block can be described equivalently as a linear SSM of the kind of \Cref{eq:ssm}.
The discrete-time recurrence given by \Cref{eq:ssm} can be completely unfolded, thanks to the lack of nonlinearity. We can write \Cref{eq:ssm} 
in a closed formulation as 
\begin{equation}
\label{eq:unfold}
    \bfx_t = \bfA^t \bfx_0 + \sum_{j=0}^{t-1} \bfA^j \bfB \delta_{t-j}. 
\end{equation}
A necessary and sufficient condition to have a bounded response for the state $\bfx_t$ is that the powers of the state matrix $\bfA$ do not explode. This condition translates into a well-known inequality on the spectral radius of the state matrix, namely that the spectral radius of $\bfA$ is less than (or at most equal to) 1.\\
Now, let us consider the state matrix as in \Cref{eq:state_matrix}, i.e. divided in an upper triangular form of 4 blocks: $\bfA_{11}, \bfA_{12}, \bfA_{21}, \bfA_{22}$ of dimensions $p \times p, p \times q, q \times p, q \times q $, where $\bfA_{21}$ is the null matrix of dimension $q \times p$. Due to the triangular form, we have that $\det(\bfA - \lambda \bfI) = \det(\bfA_{11}- \lambda \bfI) \det(\bfA_{22}- \lambda \bfI) = \det(\bfA_{11}- \lambda \bfI) (-1)^q \lambda^q $. The matrix $\bfA_{11}$ is a companion matrix. Its characteristic polynomial can be computed recursively using Laplace expansion of determinants on the first row, to get that $\det(\bfA_{11}- \lambda \bfI) = (-1)^p \Bigl(\lambda^p - \sum_{j=1}^{p} \phi_j \lambda^{p-j} \Bigl) $. Therefore, the set of eigenvalues of the state matrix of the linear SSM associated with a \ourmethod{} block with autoregressive coefficients $\{\phi_i\}_{i=1}^{p} $ is the set of roots of the polynomial in the indeterminate $\lambda$, given by 
$$
(-1)^{p+q} \lambda^q  \Bigl(\lambda^p - \sum_{j=1}^{p} \phi_j \lambda^{p-j} \Bigl).
$$
The spectral radius of $\bfA$ is the largest (in modulo) among all the complex roots of this polynomial. Thus, a \ourmethod{} block with autoregressive coefficients $\{\phi_i\}_{i=1}^{p} $ is stable if and only if the polynomial $P(\lambda)=\lambda^p - \sum_{j=1}^{p} \phi_j \lambda^{p-j} $ has all its roots inside (or at most on) the unit circle.
\end{proof}

\subsection{Proof of \Cref{thm:stability}}

If~$\sum_{j=1}^{p} |\phi_j| \leq 1 $, then the \ourmethod{} block with autoregressive coefficients $\{\phi_i\}_{i=1}^{p} $ corresponds to a stable linear SSM. 
\begin{proof}
Consider the polynomial $ P(\lambda) =  \lambda^p - \sum_{j=1}^{p} \phi_j \lambda^{p-j} $. The Lagrange upper bound \cite[Theorem 1]{hirst1997bounding} states that all the complex roots of $P(\lambda)$ have modulus less or equal than $ \max\{ 1, \sum_{j=1}^{p} |\phi_j| \}$. Therefore, if $\sum_{j=1}^{p} |\phi_j| \leq 1 $ then, from Lemma \ref{lem:stablity_ssm}, we conclude that the linear SSM corresponding to our \ourmethod{} block with autoregressive coefficients $\{\phi_i\}_{i=1}^{p} $ is stable.
\end{proof}

\subsection{Proof of \Cref{thm:propagation}}

First, we prove the following \Cref{lem:longterm_ssm}.
\begin{lemma}[Long-range interactions in \ourmethod{}]
\label{lem:longterm_ssm}
If the $k$-th power of the state matrix of a \ourmethod{} block 
has vanishing entries, then for a stable \ourmethod{} it is impossible to learn long-term dependencies of $k$ time lags in the sequence of residuals $ \delta_1, \delta_2, \ldots, \delta_t $. 
\end{lemma}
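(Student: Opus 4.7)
The plan is to exploit the closed-form unfolding of the linear SSM associated to a \ourmethod{} block, already established via Theorem \ref{thm:equivalence}, and then read off the sensitivity of the output to a distant residual. Starting from Equation \eqref{eq:unfold},
\begin{equation*}
\mathbf{x}_t = \mathbf{A}^t \mathbf{x}_0 + \sum_{j=0}^{t-1} \mathbf{A}^j \mathbf{B} \delta_{t-j}, \qquad f_t = \mathbf{C} \mathbf{x}_t + \mathbf{D} \delta_t,
\end{equation*}
the partial derivative of $f_t$ with respect to a past residual $\delta_{t-k}$, $k\geq 1$, is exactly $\mathbf{C} \mathbf{A}^k \mathbf{B}$. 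By the chain rule, every parameter gradient that carries information about how $f_t$ depends on $\delta_{t-k}$ (be it a gradient with respect to ARMA coefficients, the backbone GNN weights, or the input features that feed into $\delta_{t-k}$) factors through this very quantity.

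First, I would write this Jacobian explicitly and substitute the block forms of $\mathbf{B}$ and $\mathbf{C}$ constructed in the proof of Theorem \ref{thm:equivalence}: $\mathbf{B}$ is the standard-basis vector selecting the $(p{+}1)$-th column and $\mathbf{C}$ selects the first row, so $\mathbf{C} \mathbf{A}^k \mathbf{B}$ is precisely the $(1,p{+}1)$ entry of $\mathbf{A}^k$. Hence, if the entries of $\mathbf{A}^k$ vanish, this scalar vanishes as well, and with it the entire sensitivity of $f_t$ to $\delta_{t-k}$.

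Second, I would invoke stability to rule out any hidden compensation. By Lemma \ref{lem:stablity_ssm}, the stability hypothesis forces the spectral radius of $\mathbf{A}$ to be at most one, so the state $\mathbf{x}_t$ and all intermediate Jacobians appearing in the chain rule stay bounded. A vanishing $\mathbf{A}^k \mathbf{B}$ therefore cannot be rescued by an exploding state or a diverging downstream Jacobian, and the learning signal carrying dependencies separated by $k$ steps in the residual sequence is uniformly suppressed. This is the classical vanishing-gradient phenomenon \citep{hochreiter2001gradient,bengio1994learning} specialized to \ourmethod{}: a gradient-based optimizer observing only $f_t$ cannot reliably adjust parameters to encode $k$-lag dependencies among the $\delta_i$.

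The main obstacle will be making the phrase \emph{impossible to learn long-term dependencies} mathematically precise without drifting into a training-dynamics argument. The plan is to phrase the conclusion in the operational form ``the gradient of any loss that depends on $f_t$ with respect to any parameter whose only path of influence on $f_t$ passes through $\delta_{t-k}$ vanishes whenever $\mathbf{A}^k$ has vanishing entries'', and to lean on the vanishing-gradient literature already cited in the paper rather than attempting a learnability lower bound. This keeps the statement honest and matches how analogous results are formulated in the RNN and SSM literature.
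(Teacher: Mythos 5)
Your proposal is correct and follows essentially the same route as the paper: both arguments unfold the linear SSM via \Cref{eq:unfold}, observe that the only path from $\delta_{t-k}$ to the current state/output is the coefficient $\mathbf{A}^k\mathbf{B}$ (or $\mathbf{C}\mathbf{A}^k\mathbf{B}$), and conclude that vanishing entries of $\mathbf{A}^k$ kill this dependency. Your additions — identifying $\mathbf{C}\mathbf{A}^k\mathbf{B}$ as a single entry of $\mathbf{A}^k$ and invoking stability to rule out compensation by an exploding state — are slight refinements of the paper's more informal "machine precision" phrasing, not a different argument.
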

\begin{proof}
Assuming we want to learn patterns in the input sequence $ \delta_1, \delta_2, \ldots, \delta_t $ of length $ k$. Referring to \Cref{eq:unfold}, we need the current hidden state $\bfx_t$ to encode information that was present in $\delta_{t-k}$. Now, if $\bfA^k $ has vanishing entries, i.e., smaller than machine precision, then the same holds for the vector $\bfA^{k} \bfB $. Ergo, it is impossible to implement a linear SSM, or equivalently an ARMA model, to learn dependencies in the input of length $k$.
\end{proof}

Now, we can prove \Cref{thm:propagation}, whose statement we report here below for ease of comprehension.\\
Let us be given a \ourmethod{} block with autoregressive coefficients $\{\phi_i\}_{i=1}^{p} $. Assume the roots of the polynomial $P(\lambda)=\lambda^p - \sum_{j=1}^{p} \phi_j \lambda^{p-j} $ are all inside the unit circle.
Then, the closer the roots $P(\lambda)$ are to the unit circle, the longer the range propagation of the linear SSM corresponding to such a \ourmethod{} block.\\

\begin{proof}
Due to \Cref{lem:stablity_ssm}, the hypothesis of $P(\lambda)$ having roots inside the unit circle implies that the linear SSM corresponding to a \ourmethod{} block with autoregressive coefficients $\{\phi_i\}_{i=1}^{p} $ is a stable system.
Moreover, from the proof of \Cref{lem:longterm_ssm}, we know that the long-term propagation of a stable linear SSM corresponding to a \ourmethod{} block is prevented by the pace to which the vector $\bfA^k \bfB $ converges to the zero vector, as $k$ increases. The speed of convergence is linked to the speed of convergence of $\bfA^k $ to the null matrix, which in turn depends on the modulus of the eigenvalues of $\bfA$. From \Cref{lem:stablity_ssm}, the non-zero eigenvalues of $\bfA$ are the roots of the polynomial $P(\lambda) $. Therefore, the closer the moduli of the complex roots of the polynomial $P(\lambda)$ are to the unit circle, the longer the range propagation of the \ourmethod{} block.
\end{proof}


\section{Implementation Details}
\label{app:implementation_details}
We provide additional implementation details of our \ourmethod.

\subsection{Learning Selective ARMA Coefficients}
\label{app:learningARMA}
We now describe the implementation of the Selective ARMA coefficients presented in \Cref{sec:learningCoefficients}.
Namely, to learn the dynamics between node features in different steps within the sequences $\bfL^{(s)}$ and $\bfDelta^{(s)}$, we utilize a multi-head self-attention mechanism \citep{Vaswani2017}. Recall that the shape of the sequences is $L\times n \times d$, where $L$ is the sequence length, $n$ is the number of nodes, and $d$ is the number of hidden channels. To maintain computational efficiency, we first mean pool the sequences along the node dimension (per graph), such that the input to the attention layers is of shape $L \times c$.  We denote this operation by $\textsc{Pool}$, and it is a common operation in graph learning \citep{xu2018how, morris2019weisfeiler}. This pooling step allows our \ourmethod~to offer flexible behavior in terms of ARMA coefficients per graph, a property which was recently shown to be effective in graph learning \citep{eliasof2024granola,mantri2024digraf} while remaining efficient in terms of computations. In what follows, we explain how to obtain the ARMA coefficients using an attention mechanism. For simplicity, we describe the process in the case of $p=q=L$. In any other case, the exact computation is done with a truncated version of the sequence, taking the latest $p$ sequence elements from $\bfF^{(s)}\in\mathbf{R}^{L \times n \times d}$ (in Python notations, $\bfF^{(s)}[:-p,:,:]$, and the last $q$ sequence elements from $\bfDelta^{(s)}$ (in Python notations, $\bfDelta^{(s)}[:-q,:,:]$) That is, the truncated are fed to the attention layers as described below. In terms of using an attention mechanism, the main difference in our implementation compared to a standard attention module as in \citet{Vaswani2017} is that we remove the SoftMax normalization step, as discussed in \Cref{sec:learningCoefficients}. We denote a multi-head attention score module by $\rm{MHA}_{\rm{AR}}$ and $\rm{MHA}_{\rm{MA}}$, for the multi-head-attention for the AR and MA parts, respectively. Note that in our case, we are only interested in the pairwise scores computed within a transformer and that we do not use the SoftMax normalization step. Then, the output of the attention modules reads:
\begin{align}
    {\mathcal{A}}_{\bfF^{(s)}} &= \label{eq:mha_ar}
    tanh\left({\rm{MHA}_{\rm{AR}}}(\textsc{Pool}(\bfF^{(s)}))\right) \in \mathbb{R}^{L\times L}, \\
        {\mathcal{A}}_{\bfDelta^{(s)}} &= \label{eq:mha_ma}
    tanh\left({\rm{MHA}_{\rm{MA}}}(\textsc{Pool}(\bfDelta^{(s)}))\right) \in \mathbb{R}^{L\times L}. 
\end{align}
The $(l_i, l_j)$-th entries in ${\mathcal{A}}_{\bfF^{(s)}}$ and ${\mathcal{A}}_{\bfDelta^{(s)}}$ represent the score between the $l_i$-th and $l_j$-th elements in the sequences, respectively. Specifically, the last row of these matrices represents the connection between the current element $l$ and the elements $L-1$ in each of the respective sequences. Therefore, we define the unnormalized AR coefficients as the last row in  ${\mathcal{A}}_{\bfF^{(s)}}$, and similarly in ${\mathcal{A}}_{\bfDelta^{(s)}}$ for the MA coefficients. Using Python notations, this is described as:
\begin{align}
    \label{eq:unnormalizedMHA}
    \tilde{\bfc}_{\rm{AR}}(\bfF^{(s)}) &= {\mathcal{A}}_{\bfF^{(s)}}[-1,:] \in \mathbb{R}^{L}, \\
    \tilde{\bfc}_{\rm{MA}}(\bfDelta^{(s)}) &= {\mathcal{A}}_{\bfDelta^{(s)}}[-1,:] \in \mathbb{R}^{L}. 
\end{align}
To normalize the coefficients, we follow the following strategy:
\begin{align}
    \label{eq:normalizedMHA_AR}
    {\bfc}_{\rm{AR}}(\bfF^{(s)}) &= \frac{\tilde{\bfc}_{\rm{AR}}(\bfF^{(s)})}{\sum \tilde{\bfc}_{\rm{AR}}(\bfF^{(s)})}, \\
    \label{eq:normalizedMHA_MA}
    {\bfc}_{\rm{MA}}(\bfDelta^{(s)}) &= \frac{\tilde{\bfc}_{\rm{MA}}(\bfDelta^{(s)})}{\sum \tilde{\bfc}_{\rm{MA}}(\bfDelta^{(s)})}. 
\end{align}

\subsection{Overall \ourmethod~architecture}
\label{app:overall_architecture}
Our \ourmethod~is illustrated in \Cref{fig:model}, and it is comprised of three main components: (i) the initial embedding, which is described in \Cref{eq:initialization}. The role of this part is to transform a static input graph into a sequence of graph inputs. Namely, given features of shape $n \times c$, it yields two sequences: A sequence of states $\bfF^{(0)}$, and a sequence of residuals $\bfDelta^{(0)}$, both of shape $L \times n \times d$, where $d$ is the embedding size of the input $c$ channels. (ii) These sequences are then processed by a \ourmethod~block,  as discussed in \Cref{sec:graph_neural_arma_block}. (iii) A final classifier $g_{\rm{out}}: \mathbb{R}^{d} \rightarrow \mathbb{R}^{o}$ that takes the last state in the updated sequence, denoted by $\bff^{(L\cdot S-1)} \in \mathbb{R}^{n\times d}$, and projects it to the desired number of output channels. The classifier is implemented using an MLP, as is standard in graph learning \citep{xu2018how}. Note that, the last state   $\bff^{(L\cdot S-1)}$ contains node features, and therefore, in the case of a graph level task, we first pool the node features using mean pooling as in \cite{xu2018how}, to obtain a prediction vector $g_{\rm{out}}(\textsc{Pool}(\bff^{(L\cdot S-1)})) \in \mathbb{R}^{o}$. In the case of node-level tasks, the node-wise prediction is obtained by $g_{\rm{out}}(\bff^{(L\cdot S-1)})\in \mathbb{R}^{(n\times o)}$.

\section{Experimental Details}\label{app:experimental_details}
In this section, we provide additional experimental details. 

\textbf{Compute.} Our experiments are run on NVIDIA A6000 and A100 GPUs, with 48GB and 80GB of memory, respectively. Our code is implemented in PyTorch \cite{paszke2019pytorch}, and will be openly released upon acceptance.

\subsection{Employed baselines}\label{app:baselines_details}
In our experiments, the performance of our method is compared with various state-of-the-art GNN baselines from the literature. Specifically, we consider:
\begin{itemize}
    \item classical MPNN-based methods, \ie GCN~\citep{Kipf2016}, GraphSAGE~\citep{SAGE}, GAT~\citep{veličković2018graph}, GatedGCN~\citep{gatedgcn}, GIN~\citep{xu2018how}, {ARMA}~\citep{bianchi2019graph}, GINE~\citep{gine}, GCNII~\citep{gcnii}, and CoGNN~\citep{finkelshtein2024cooperative};
    \item heterophily-specific models, \ie H2GCN \citep{h2gcn}, CPGNN \citep{CPGNN}, FAGCN \citep{FAGCN}, GPR-GNN \citep{GPR_GNN},
FSGNN \citep{FSGNN}, GloGNN \cite{GloGNN}, GBK-GNN \citep{GBK_GNN}, and JacobiConv \citep{jacobiconv};
    \item DE-DGNs, \ie DGC~\citep{DGC}, GRAND~\citep{chamberlain2021grand}, GraphCON~\citep{rusch2022graph}, A-DGN~\citep{gravina_adgn}, and {SWAN}~\citep{gravina_swan};
    \item Graph Transformers, \ie Transformer~\citep{Vaswani2017, dwivedi2021generalization}, GT~\citep{ijcai2021p0214}, SAN~\citep{san}, GPS~\citep{graphgps}, GOAT~\citep{goat},  and Exphormer~\citep{Shirzad2023};
    \item Higher-Order DGNs, \ie DIGL~\citep{DIGL}, MixHop~\citep{abu2019mixhop}, and DRew~\citep{drew}.
    \item SSM-based GNN, \ie Graph-Mamba~\citep{wang2024mamba}, GMN~\citep{behrouz2024graphmamba}, and GPS+Mamba~\citep{behrouz2024graphmamba}
\end{itemize} 

\subsection{Graph Transfer} \label{app:details_transfer_task}
\textbf{Dataset.}
We constructed the graph transfer datasets based on the work of \cite{diGiovanniOversquashing}. Unlike the original approach, we initialized node features randomly, sampling from a uniform distribution in the range $[0, 0.5)$. In each graph, we assigned labels of value ``1" and ``0" to a source node and a target node, respectively. Graphs were sampled from three different distributions: line, ring, and crossed-ring (see Figure~\ref{fig:graph_transfer_example} for a visual exemplification).
In ring graphs, the nodes form a cycle of size $n$, with the source and target placed $\lfloor n/2 \rfloor$ apart. Similarly, crossed-ring graphs consisting of cycles of size $n$ but introduced additional edges crossing intermediate nodes, while still maintaining a source-target distance of $\lfloor n/2 \rfloor$. Lastly, the line graph contains a path of length $n$ between the source and target nodes.
Our experiments focus on a regression task aimed at swapping the labels of the source and target nodes while keeping intermediate node labels unchanged. The input dimension is 1, and the distances between source and target nodes are set to 3, 5, 10, and 50. We generated 1000 graphs for training, 100 for validation, and 100 for testing.

\begin{figure}[h]
\centering
\begin{subfigure}{0.29\textwidth}
    \includegraphics[width=\textwidth]{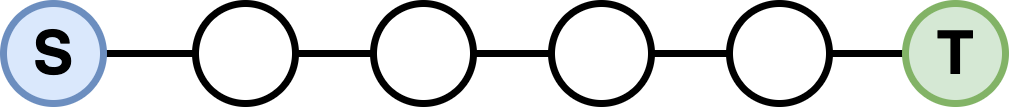}
    \vspace{0.12cm}
    \caption{Line}
\end{subfigure}
\hspace{0.8cm}
\begin{subfigure}{0.26\textwidth}
    \includegraphics[width=\textwidth]{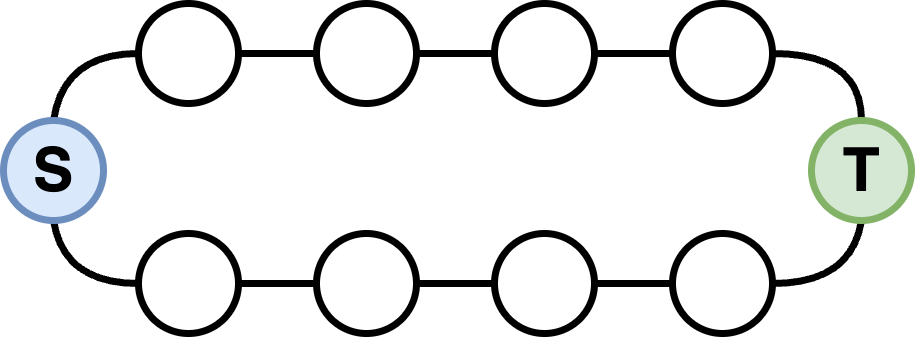}
    \caption{Ring}
\end{subfigure}
\hspace{0.8cm}
    \begin{subfigure}{0.26\textwidth}
    \includegraphics[width=\textwidth]{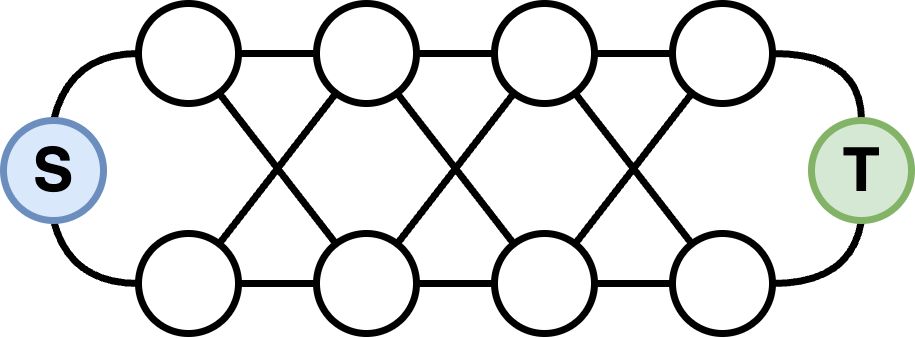}
\caption{Crossed-Ring}
\end{subfigure}
\caption{Line, ring, and crossed-ring graphs where the distance between source and target nodes is equal to 5. Nodes marked with ``S" are source nodes, while the nodes with a ``T" are target nodes.}
\label{fig:graph_transfer_example}
\end{figure}

\textbf{Experimental Setting.}
We design each model as a combination of three main components. The first is the encoder which maps the node input features into a latent hidden space; the second is the graph convolution (\ie \ourmethod{} or the other baselines); and the third is a readout that maps the output of the convolution into the output space. The encoder and the readout share the same architecture among all models in the experiments.

We perform hyperparameter tuning via grid search, optimizing the Mean Squared Error (MSE) computed on the node features of the whole graph. We train the models using Adam optimizer for a maximum of 2000 epochs and early stopping with maximal patience of 100 epochs on the validation loss. For each model configuration, we perform 4 training runs with different weight initialization and report the average of the results.
We report in Table~\ref{tab:hyperparams} the grid of hyperparameters exploited for this experiment.

\subsection{Graph Property Prediction}\label{app:details_gpp}
\textbf{Dataset.}
We adhered to the data generation procedure described in \cite{gravina_adgn}. Graphs were randomly drawn from several distributions, \eg Erd\H{o}s–R\'{e}nyi, Barabasi-Albert, caveman, tree, and grid. Each graph contains between 25 and 35 nodes, with nodes assigned with random identifiers as input features sampled from a uniform distribution in the range $[0, 1)$. The target values represent single-source shortest paths, node eccentricity, and graph diameter. The dataset included a total of 7,040 graphs, with 5,120 for training, 640 for validation, and 1,280 for testing.  \rebuttal{The tasks in this benchmark require capturing long-term dependencies between nodes, as solving them requires computing the shortest paths within the graph. Moreover, as described in \citet{gravina_adgn}, similar to standard algorithmic approaches (e.g., Bellman-Ford, Dijkstra's algorithm), accurate solutions depend on the exchange of multiple messages between nodes, making local information insufficient for this task. Additionally, the graph distributions used in these tasks are sampled from caveman, tree, line, star, caterpillar, and lobster distributions, all of which include bottlenecks by design, which are known to be a cause of oversquashing \citep{Topping2022}.}

\textbf{Experimental Setting.}
We employ the same datasets, hyperparameter space, and experimental setting presented in \citet{gravina_adgn}. Therefore, we perform hyperparameter tuning via grid search, optimizing the Mean Square Error (MSE), training the models using Adam optimizer for a maximum of 1500 epochs, and early stopping with patience of 100 epochs on the validation error. For each model configuration, we perform 4 training runs with different weight initialization and report the average of the results.
We report in Table~\ref{tab:hyperparams} the grid of hyperparameters exploited for this experiment.

\subsection{Long Range Graph Benchmark}\label{app:details_lrgb}
\textbf{Dataset.}
To assess the performance on real-world long-range graph benchmarks, we considered the Peptides-func and Peptides-struct datasets~\cite{LRGB}. The graphs represent 1D amino acid chains, with nodes corresponding to the heavy (non-hydrogen) atoms of the peptides, and edges representing the bonds between them. 
\textit{Peptides-func} is a multi-label graph classification dataset containing 10 classes based on peptide functions, such as antibacterial, antiviral, and cell-cell communication. 
\textit{Peptides-struct} is a multi-label graph regression dataset, focused on predicting 3D structural properties of peptides. The regression tasks involve predicting the inertia of molecules based on atomic mass and valence, the maximum atom-pair distance, sphericity, and the average distance of all heavy atoms from the plane of best fit. 
Both datasets, Peptides-func and Peptides-struct, consist of 15,535 graphs, encompassing a total of 2.3 million nodes. We used the official splits from \citet{LRGB}, and report the average and standard-deviation performance across 3 seeds.

\textbf{Experimental Setting.}
We employ the same datasets and experimental setting presented in \citet{LRGB}. Therefore, we perform hyperparameter tuning via grid search, optimizing the Average Precision (AP) in the Peptide-func task and the Mean Absolute Error (MAE) in the Peptide-struct task, training the models using AdamW optimizer for a maximum of 300 epochs. For each model configuration, we perform 3 training runs with different weight initialization and report the average of the results. Also, we follow the guidelines in \cite{LRGB, drew} and stay within the 500K parameter budget.
In Table~\ref{tab:hyperparams} we report the grid of hyperparameters exploited for this experiment.

\subsection{GNN Benchmarks}\label{app:details_gnn_bench}
\textbf{Dataset.}
\textit{MalNet-Tiny}~\citep{malnet} is a graph classification dataset consisting of 5,000 function call graphs derived from software samples in the Android ecosystem. Each graph contains at most 5,000 nodes, which represent functions. Edges correspond to calls between functions. MalNet-Tiny is a graph classification dataset, comprising of 5 classification labels, including 1 benign software and 4 types of malware. We used stratified splitting, following a 70\%-10\%-20\% split, as in \citet{malnet}.

In the heterophilic setting, we consider Roman-empire, Amazon-ratings, Minesweeper, Tolokers, and Questions tasks from \cite{platonov2023a}.
\textit{Roman-Empire} is a dataset derived from the Roman Empire article in Wikipedia. Each node represents a word, and edges are formed if words either follow one another or are connected syntactically. The task involves node classification based on the syntactic role of the word, with 18 classes. The graph is chain-like, has sparse connectivity, and potentially long-range dependencies.
\textit{Amazon-Ratings} is based on the Amazon product co-purchasing network. Nodes represent products, and edges connect products that are frequently bought together. The task is to predict the average product rating, which is grouped into five classes. Node features are derived from fastText embeddings of product descriptions.
\textit{Minesweeper} is a synthetic dataset consisting of a 100x100 grid where nodes represent cells, and edges connect neighboring cells. 20\% of the nodes are randomly selected as mines. The task is to predict which nodes are mines, employing as node features the one-hot-encoded numbers of neighboring mines.
\textit{Tolokers} is a dataset based on the Toloka crowdsourcing platform~\citep{Tolokers}, where nodes represent workers (tolokers), and edges are formed if workers collaborate on the same project. The task is to predict whether a worker has been banned, using features from their profile and performance statistics.
\textit{Questions} is based on the data from the Yandex Q question-answering website. Nodes represent users, and edges connect users who have interacted by answering each other's questions. The task is to predict which users remained active on the platform, with node features derived from user descriptions. 
We report in Table~\ref{tab:hetero_stats} a summary of the statistics of the employed heterophilic datasets. 
\begin{table}[h]

\centering
\caption{Statistics of the heterophilous datasets.}
\label{tab:hetero_stats}
\scriptsize
\begin{tabular}{lccccc}
\hline\toprule
               & \textbf{Roman-empire} & \textbf{Amazon-ratings} & \textbf{Minesweeper} & \textbf{Tolokers} & \textbf{Questions}\\\midrule
N. nodes          & 22,662        & 24,492          & 10,000       & 11,758    & 48,921\\
N. edges          & 32,927        & 93,050          & 39,402       & 519,000   & 153,540\\
Avg degree     & 2.91         & 7.60           & 7.88        & 88.28    & 6.28\\
Diameter       & 6,824         & 46             & 99          & 11       & 16\\
Node features  & 300          & 300            & 7           & 10       & 301\\
Classes        & 18           & 5              & 2           & 2        & 2\\
Edge homophily & 0.05         & 0.38           & 0.68        & 0.59     & 0.84\\
\bottomrule\hline      
\end{tabular}
\end{table}

\textbf{Experimental Setting.}
We employ the same datasets and experimental setting presented in \citet{malnet} and \cite{platonov2023a}. Therefore, we perform hyperparameter tuning via grid search, optimizing the Accuracy (Acc) in the MalNet-Tiny, Roman-Empire, and Amazon-ratings tasks, and the ROC Area Under the Curve (AUC) in the Minesweeper, Tolokers, and Questions task. \rebuttal{The results for these datasets, reported in \Cref{tab:results_malnet,tab:results_hetero}, report the results of the basic version of GPS, because we do not include additional encodings in our \ourmethod{} coupled with GPS. For a broader comparison, we report additional results in \Cref{tab:results_hetero_complete}.} We trained the models using AdamW optimizer for a maximum of 300 epochs for the heterophilic tasks and 300 for MalNet-Tiny. On the heterophilic datasets, we use the official splits provided in \citet{platonov2023a} and report the average and standard deviation of the obtained performance. For MalNet-Tiny, we repeat the experiment on 4 different seeds and report the average performance alongside the standard deviation.
We report in Table~\ref{tab:hyperparams} the grid of hyperparameters considered for this experiment.

\subsection{Hyperparameters}\label{app:hyperparams}
In Table~\ref{tab:hyperparams}, we report the grids of hyperparameters employed in our experiments by our method. Besides typical learning hyperparameters such as learning rate and weight decay, our \ourmethod~introduces several possible hypermeters: the sequence length $L$, the autoregressive order $p$, the moving average $q$, the number of recurrent steps applied at each \ourmethod~block, and the number of \ourmethod~blocks $S$.  We now describe our choices, aiming to maintain a reasonable number of hyperparameters and
to obtain a large prediction window, which was shown to be useful in SSMs \citep{Gu2021}. Because this paper focuses on using a sequential model on static graph inputs, the sequence length is to be determined, and we consider different lengths depending on the task. The ARMA orders $p$ and $q$ can assume any values as long as they are not larger than $L$, and the most general case is when $p=q=L$, and this was our choice, as it covers other choices where $p$ or $q$ are smaller than $L$. For the number of recurrence steps $R$, we aim to obtain a relatively large prediction window with respect to the input sequence length, and therefore we choose to set $R=L$ in all experiments.

\begin{table}[h]
\centering
\caption{The grid of hyperparameters employed during model selection for the 
graph transfer tasks (\emph{Transfer}), graph property prediction tasks (\emph{GraphProp}), Long Range Graph Benchmark (\emph{LRGB}), and GNN benchmarks (\emph{G-Bench}), \ie MalNet-Tiny and heterophilic datasets. 
}
 \label{tab:hyperparams}
\vspace{5pt}

\scriptsize
\begin{tabular}{l|l|l|l|l}
\hline\toprule
\multirow{2}{*}{\textbf{Hyperparameters}}  & \multicolumn{4}{c}{\textbf{Values}}\\\cmidrule{2-5}
& {\bf \emph{Transfer}} & {\bf \emph{GraphProp}} & {\bf \emph{LRGB}} & {\bf \emph{G-Bench}}\\\midrule
Optimizer       & Adam            & Adam          & AdamW              & AdamW\\
Learning rate   & 0.001           & 0.003         & 0.001, \ 0.0005, 0.0001    & 0.001, 0.0005 ,0.0001\\
Weight decay    & 0               & $10^{-6}$     & 0, \ 0.0001        & 0, 0.0001 \\
Dropout    & 0               & 0     & 0, \ 0.3, \ 0.5        &  0, \ 0.3, \ 0.5 \\
Activation function ($\sigma$)    &  ReLU & ReLU & ELU, GELU, ReLU &  ELU, GELU, ReLU\\

Embedding dim (d)   & 64              & 10, 20, 30    & 64, \ 128          & 64, 128, 256\\
Sequence Length (L)          & 1, 3, 5, 10, 50 & 1, 5, 10, 20  & 2, 4, 8, 16        & 2, 4, 8, 16 \\
Blocks  (S)& 1, 2             & 1, 2           & 1, 2, 4        & 1, 2, 4 \\
Graph Backbone &  \multicolumn{4}{c}{{GCN, GPS, GatedGCN}}\\
\bottomrule\hline
\end{tabular}
\end{table} 
\section{Complexity and Runtimes}
\label{app:complexity_runtimes}
We discuss the theoretical complexity of our method, followed by a comparison of runtimes with other methods.

\textbf{Time Complexity.} We analyze the case where linear in graph size (nodes and edges) complexity MPNN (such as GCN) is used within \ourmethod. Our \ourmethod~is comprised of $L$ initial MLPs, $S$ \ourmethod~blocks, each with $L$ recurrent steps, and a final readout layer. Note that $L$ is the sequence length, which is a hyperparameter and does not exceed the value of 50 in our experiments. The initial MLPs operate on the input $f \in \mathbb{R}^{n\times c}$ and embed them to a hidden dimension $d$. Therefore, their time complexity is $\mathcal{O}(L\cdot n \cdot c \cdot d$). Each \ourmethod~block is comprised of two attention layers -- one for the \emph{pooled} states sequence and the other for the residual \emph{pooled} sequence, and a GNN layer for predicting the current step residual, which operates on graph node features. The attention layer time complexity is $\mathcal{O}(L^2 d^2)$ because the pooled (across the graph nodes) sequence is of shape $L\times d$, and the GNN layer complexity is $\mathcal{O}(n + m)$, where $n$ is the number of nodes and $m$ is the number of edges in the graph. Note that usually $m\gg n$, so the GNN complexity can be rewritten as $\mathcal{O}(m)$; however, in the following, we keep the more general case.  In total, we have $S$ \ourmethod~blocks, where $S$ is a hyperparameter, and is typically small, up to 4. The final readout layer is a standard MLP and, therefore, has the time complexity of $\mathcal{O}(n \cdot d \cdot o)$ for node-wise tasks, and $\mathcal{O}(d \cdot o)$ for graph-level tasks. Therefore, the overall time complexity (including initial MLPs and readout) of our \ourmethod~is $\mathcal{O}\left(L\cdot n \cdot c \cdot d + SL\cdot(n + m + L^2 \cdot d^2)+n \cdot d \cdot o\right)$.

\textbf{Space Complexity.} We analyze the case where linear in graph size (nodes and edges) complexity MPNN (such as GCN) is used within \ourmethod. The space complexity of the initial MLPs is $\mathcal{O}(L \cdot c \cdot d)$. The space complexity for each \ourmethod~block is $\mathcal{O}(d^2)$ for the GNN layer, and similarly $\mathcal{O}(d^2)$ for the two attention layers. Overall, we have $S$ such blocks. The readout layer space complexity is $\mathcal{O}(d\cdot o )$. Thus, the overall space complexity (including initial MLPs and readout) of \ourmethod~is $\mathcal{O}(L \cdot c \cdot d + S\cdot d^2 + d \cdot o)$.

\textbf{{Runtimes.}} We provide runtimes for \ourmethod~alongside other methods, such as Graph GPS and GCN, in Table \ref{tab:runtimes}. In all cases, we use a model with 256 hidden dimensions and a varying depth (changing the sequence length $L$ from 2 to 16 in our \ourmethod{} with $S=2$ \ourmethod~blocks, recall that \ourmethod~depth is $SL$, and the number of layers is the backbone for other methods) and report the training and inference times, as well as the performance on the Roman-Empire dataset, for reference. As can be seen from the results in the Table, our \ourmethod~is positioned as a middle ground solution in terms of \emph{computational} efficiency, between linear complexity MPNNs like GCN and quadratic complexity methods like GPS. Notably, our \ourmethod{} achieves better performance than GCN and GPS, and maintains its performance as depth increases, different than GCN. Still, in some cases, lower computational cost might be a strong requirement, for example, on edge devices. To this end, we can use the \emph{naive} learning approach of ARMA coefficients, as discussed in \Cref{sec:learningCoefficients}, which still utilizes our \ourmethod~but avoids the use of an attention mechanism for a selective  ARMA coefficient learning. In this case, as we show in \Cref{tab:runtimes}, it is still possible to obtain significant improvement compared with the baseline performance with lower computational time, although with less flexibility that is offered by our selective ARMA coefficient learning. \rebuttal{In addition, for a broader comparison, we consider the best-performing variant of GPS (GPS\textsubscript{GAT+Performer} (RWSE)), showing that also in this case, our \ourmethod{} offers better performance. Furthermore, the results in \Cref{tab:runtimes} offer comparisons of GRAMA, GCN, and GPS under equivalent runtime budgets, showing that GRAMA matches or exceeds the accuracy of GPS while being more efficient. Additionally, the non-selective GRAMA variant maintains high performance with further reductions in computational cost, showcasing its applicability of \ourmethod{}  constrained computational scenarios.} 
\rebuttal{Finally, in Table~\ref{tab:runtimes_rebuttal} we compare our \ourmethod{} with recent state-of-the-art methods in terms of both time and downstream performance. Our \ourmethod{} requires similar time to other methods like GPS+Mamba and GMN, which are also selective models, while requiring significantly less time than GPS.}
All runtimes are measured on an NVIDIA A6000 GPU with 48GB of memory.

\begin{table}[h]
\small
\centering
\caption{Training and Inference Runtime (milliseconds) and obtained node classification accuracy (\%) on the Roman-Empire dataset.  Note that in {\ourmethod$_{\textsc{GCN}}$ (Naive)}, the ARMA coefficients are learned, but not input-adaptive as in {\ourmethod$_{\textsc{GCN}}$}.}
\label{tab:runtimes}
\begin{tabular}{@{}lcccccccccc@{}}
\hline\toprule
\multirow{2}{*}{Metrics} & \multirow{2}{*}{Method} & \multicolumn{4}{c}{Depth} \\ \cmidrule(l){3-6} 
    &    & 4 & 8  & 16 & 32   \\
    \midrule
        Training (ms) &  \multirow{3}{*}{GCN} & 18.38 & 33.09 & 61.86 & 120.93\\ 
        Inference (ms) & &  9.30 & 14.64 & 27.95 &  53.55 \\
        Accuracy (\%) & & 73.60 & 61.52 & 56.86 & 52.42  \\
  \midrule
          Training (ms) & \multirow{3}{*}{GPS} & 1139.05 & 2286.96 & 4545.46 & OOM\\ 
        Inference (ms) &  & 119.10 & 208.26 & 427.89 & OOM \\
        Accuracy (\%) & & 81.97 & 81.53 & 81.88  & OOM \\
        \midrule
                  \rebuttal{Training (ms)} & \multirow{3}{*}{\rebuttal{GPS\textsubscript{GAT+Performer} (RWSE)}} & 1179.08 & 2304.77 & 4590.26 & OOM\\ 
        \rebuttal{Inference (ms)} &  & 120.11 & 209.98 & 429.03 & OOM \\
        \rebuttal{Accuracy (\%)} & & 84.89  & 87.01 & 86.94  & OOM \\
\midrule
                Training (ms) & \multirow{3}{*}{\ourmethod$_{\textsc{GCN}}$ (Naive)} & 41.16& 98.83  & 249.68 & 747.26 \\ 
        Inference (ms) &  & 13.03 &   26.83 & 63.61 &  164.87  \\
        Accuracy (\%) & & 83.23 & 84.72 & 85.13 & 85.04 \\
        \midrule
        Training (ms) & \multirow{3}{*}{\ourmethod$_{\textsc{GCN}}$} &  75.75 & 141.79 & 463.76 & 1378.91 \\ 
        Inference (ms) &  & 40.33 & 70.91 & 240.78 & 702.17  \\
        Accuracy (\%) & & 86.33 & 88.14 &	88.24	 & 88.22  \\
  \bottomrule\hline
\end{tabular}%
\end{table}

\begin{table}[h]
\small
\centering
\caption{\rebuttal{Training runtime per epoch (milliseconds) and obtained node classification accuracy (\%) on the Roman-Empire dataset.  Note that in {\ourmethod$_{\textsc{GCN}}$ (Naive)}, the ARMA coefficients are learned, but not input-adaptive as in {\ourmethod$_{\textsc{GCN}}$}.}}
\label{tab:runtimes_rebuttal}
\begin{tabular}{lcc}
\hline\toprule
\multirow{2}{*}{\textbf{Model}}                  & \textbf{Training runtime per epoch} & \textbf{Accuracy}\\
 & (ms) & (\%)\\
\midrule
GatedGCN                    & 18.38                     & 73.69$_{\pm0.74}$        \\
GPS                         & 1139.05                   & 82.00$_{\pm0.61}$        \\
GPS + Mamba         & 320.39                    & 83.10$_{\pm0.28}$        \\
GMN                 & 387.04                    & 87.69$_{\pm0.50}$        \\
\midrule
\ourmethod$_{\textsc{GCN}}$ (Naive) & 249.68                    & 85.13$_{\pm0.36}$        \\
\ourmethod$_{\textsc{GCN}}$                 & 362.41                    & 88.61$_{\pm0.43}$  \\
  \bottomrule\hline
\end{tabular}%
\end{table}

\section{Additional Results And Comparisons}\label{app:complete_res}

\subsection{Ablation Studies}
\label{app:ablations}
To provide a comprehensive understanding of the different components and hyperparameters of our \ourmethod, we now present several ablations studies. 

\textbf{Selective vs. Naive ARMA Coefficients.}
In \Cref{sec:learningCoefficients}, we describe a novel, selective way to predict the ARMA coefficients that govern our \ourmethod~model, as described in \Cref{sec:method}. We now empirically check whether the added flexibility and adaptivity help in practice. To do that, we present the results with 'naively' learned ARMA coefficients, a variant denoted by \ourmethod~(Naive), and also report the results with our \ourmethod~model (these are the same results presented in the rest of our experiments). The results are presented in \Cref{tab:selective_ablation}. The results show that (i) our \ourmethod, as an architecture, regardless of the use of selective ARMA coefficients or not, significantly improves the baseline (GCN), and (ii) learning selective ARMA coefficients offers further performance gains compared with naive coefficients.

\begin{table}[h]
\centering
\footnotesize
\caption{The significance of learning selective ARMA coefficients. Our \ourmethod~architectures improve baseline performance, and its selective mechanism further improves performance. Note that in {\ourmethod$_{\textsc{GCN}}$ (Naive)}, the ARMA coefficients are learned, but not input-adaptive as in {\ourmethod$_{\textsc{GCN}}$}.}
    \label{tab:selective_ablation}
\begin{tabular}{lcc}
\hline
\toprule
\multirow{2}{*}{\textbf{Model}} & \textbf{Roman-empire} & \textbf{Peptides-func}\\
    & \scriptsize{Acc $\uparrow$} & \scriptsize{AP $\uparrow$} \\ 
    \midrule
    GCN &  73.69$_{\pm0.74}$ & 59.30$_{\pm0.23}$\\
    \midrule
    \ourmethod$_{\textsc{GCN}}$ (Naive) & 85.13$_{\pm0.58}$ &  68.98$_{\pm0.52}$  \\ 
    \ourmethod$_{\textsc{GCN}}$ & 88.61$_{\pm0.43}$ & 70.93$_{\pm0.78}$  \\
         \bottomrule
    \end{tabular}
    
\end{table}

\textbf{Performance vs. Model Depth.}
We evaluate the performance of our \ourmethod~on varying depths. The depth is influenced by the number of recurrences $R$ and $S$ \ourmethod~blocks. As discussed in \Cref{sec:method}, to reduce the number of hyperparameters and obtain a large prediction window with respect to the input sequence, we choose $R=L$. That is, the number of recurrent steps is $L$. Thus,  the effective depth of the model is the multiplication $S \cdot L$. Therefore, we test the performance of \ourmethod~with varying depths, up to a depth of 128 layers, and maintain a constant width of $256$. The results reported in \Cref{tab:depth_study} demonstrate the ability of \ourmethod~to maintain and improve its performance with more layers. 

\begin{table}[h]
    \centering
    \footnotesize
    \caption{The obtained node classification accuracy (\%) with \ourmethod$_{\textsc{GCN}}$ on the Roman-Empire with a varying  sequence length size $L$ and blocks $S$. Our \ourmethod~improves with more layers, and maintains its performance with deep models.}
    \label{tab:depth_study}
    \begin{tabular}{ccccc}
    \hline
    \toprule
     Sequence Length $L$ $\downarrow$ / Blocks $S$ $\rightarrow$ & 2 & 4 & 8 & 16  
     \\ \midrule
     2 & 86.33 & 88.14 &	88.24	 & 88.22 \\
     4  & 87.30 &	88.06 &	88.61 &	88.57\\
     8 & 88.41	& 88.15 &	88.54 &	88.46 \\
         \bottomrule
    \end{tabular}
\end{table}

\textbf{Hyperparameter Influence.} In  \Cref{tab:depth_study} we showed the performance of \ourmethod~under varying depths. However, note that this study also shows the influence of the number of recurrences $R$ (which is also the length of the sequence $L$) and the number of \ourmethod~blocks $S$. The results show that both are beneficial as increased in terms of added performance, and that enlarging to a value larger than $4$ maintains consistent results. 

Furthermore, in \Cref{tab:hidden_dims}, we show the performance of \ourmethod~with a varying with (i.e., number of hidden channels), when choosing the other hyperparameters to be fixed, and in particular $S=L=4$. From this experiment, we can see that while some configurations offer better performance than others, overall, our \ourmethod~consistently improves the baseline methods compared with other baselines reported in  \Cref{tab:results_hetero_complete}.

\begin{table}[ht]
\centering
\footnotesize
\caption{Node classification accuracy (\%) on Roman-Empire with varying width of $\ourmethod$.}
\label{tab:hidden_dims}
\begin{tabular}{lccc}
\hline
\toprule
\multicolumn{1}{c}{Model $\downarrow$ / Width $\rightarrow$} & 64 & 128 & 256 \\ 
    \midrule
    \ourmethod$_{\textsc{GCN}}$ & 87.79 & 88.45 & 88.61  \\ \ourmethod$_{\textsc{GatedGCN}}$ &91.79 & 91.66 & 91.68   \\ 
    \ourmethod$_{\textsc{GPS}}$ & 91.28 & 91.70 & 91.19  \\ 
         \bottomrule
    \end{tabular}
\end{table}

\subsection{Extended Comparisons}
In Table~\ref{tab:results_lrgb_complete}, we report the complete results for the LRGB tasks, including more multi-hop DGNs and ablating on the scores obtained with the original setting from \cite{LRGB} and the one proposed in \cite{tonshoff2023where}, which leverage added residual connections and 3-layers MLP as a decoder to map the GNN output into the final prediction. In \Cref{tab:results_hetero_complete}, we present additional comparisons with various methods on the heterophilic node classification datasets from \citet{platonov2023a}. In both Tables, we color the top three methods. Different from the main body of the paper, here, we color the best methods, including sub-variants of methods, for an additional perspective on the results.

\begin{table}[h!]

    \centering
    \caption{Results for Peptides-func and Peptides-struct averaged over 3 training seeds. Baseline results are taken from \cite{LRGB} and \cite{drew}. Re-evaluated methods employ the 3-layer MLP readout proposed in \cite{tonshoff2023where}. Note that all MPNN-based methods include structural and positional encoding.
    The \one{first}, \two{second}, and \three{third} best scores are colored. Baseline results are reported from \cite{drew, tonshoff2023where, gravina_swan}. $^\ddag$ means 3-layer MLP readout and residual connections are employed.
    }\label{tab:results_lrgb_complete}
    \scriptsize
    \begin{tabular}{@{}lcc@{}}
    \hline\toprule
    \multirow{2}{*}{\textbf{Model}} & \textbf{Peptides-func}  & \textbf{Peptides-struct}              
    \\
    & \scriptsize{AP $\uparrow$} & \scriptsize{MAE $\downarrow$} 
    \\ \midrule  
    \textbf{MPNNs} \\
    $\,$ GCN                        & 59.30$_{\pm0.23}$ & 0.3496$_{\pm0.0013}$ \\ 
    $\,$ GINE                       & 54.98$_{\pm0.79}$ & 0.3547$_{\pm0.0045}$ \\ 
    $\,$ GCNII                      & 55.43$_{\pm0.78}$ & 0.3471$_{\pm0.0010}$ \\ 
    $\,$ GatedGCN                   & 58.64$_{\pm0.77}$ & 0.3420$_{\pm0.0013}$ \\ 
    $\,$ ARMA 
    & 64.08$_{\pm0.62}$ & 0.2709$_{\pm0.0016}$  \\
    
    \midrule
    \textbf{Multi-hop GNNs}\\
    $\,$ DIGL+MPNN           & 64.69$_{\pm00.19}$         & 0.3173$_{\pm0.0007}$ \\ 
    $\,$ DIGL+MPNN+LapPE     & 68.30$_{\pm00.26}$         & 0.2616$_{\pm0.0018}$ \\ 
    $\,$ MixHop-GCN          & 65.92$_{\pm00.36}$         & 0.2921$_{\pm0.0023}$ \\ 
    $\,$ MixHop-GCN+LapPE    & 68.43$_{\pm00.49}$         & 0.2614$_{\pm0.0023}$ \\ 
    $\,$ DRew-GCN            & 69.96$_{\pm00.76}$         & 0.2781$_{\pm0.0028}$ \\ 
    $\,$ DRew-GCN+LapPE             & \one{71.50$_{\pm0.44}$}   & 0.2536$_{\pm0.0015}$ \\ 
    $\,$ DRew-GIN            & 69.40$_{\pm0.74}$         & 0.2799$_{\pm0.0016}$ \\ 
    $\,$ DRew-GIN+LapPE      & \two{71.26$_{\pm0.45}$}   & 0.2606$_{\pm0.0014}$ \\ 
    $\,$ DRew-GatedGCN       & 67.33$_{\pm0.94}$         & 0.2699$_{\pm0.0018}$ \\ 
    $\,$ DRew-GatedGCN+LapPE & 69.77$_{\pm0.26}$         & 0.2539$_{\pm0.0007}$ \\ 
    
    \midrule
    
    \textbf{Transformers} \\
    $\,$ Transformer+LapPE & 63.26$_{\pm1.26}$ & 0.2529$_{\pm0.0016}$           \\ 
    $\,$ SAN+LapPE         & 63.84$_{\pm1.21}$ & 0.2683$_{\pm0.0043}$           \\ 
    $\,$ GraphGPS+LapPE    & 65.35$_{\pm0.41}$ & 0.2500$_{\pm0.0005}$   \\ 
    \midrule
    \textbf{Modified and Re-evaluated}$^\ddag$\\
    $\,$ GCN
    & 68.60$_{\pm0.50}$ & 0.2460$_{\pm0.0007}$\\
    $\,$ GINE
    & 66.21$_{\pm0.67}$ & 0.2473$_{\pm0.0017}$\\
    $\,$ GatedGCN
    & 67.65$_{\pm0.47}$ & 0.2477$_{\pm0.0009}$\\
    $\,$ DRew-GCN+LapPE
    & 69.45$_{\pm0.21}$        & 0.2517$_{\pm0.0011}$\\
    $\,$ GraphGPS+LapPE
    & 65.34$_{\pm0.91}$ & 0.2509$_{\pm0.0014}$\\
    \midrule
    \textbf{DE-GNNs} \\
    $\,$ GRAND      & 57.89$_{\pm0.62}$ & 0.3418$_{\pm0.0015}$ \\ 
    $\,$ GraphCON   & 60.22$_{\pm0.68}$ & 0.2778$_{\pm0.0018}$ \\ 
    $\,$ A-DGN      & 59.75$_{\pm0.44}$ & 0.2874$_{\pm0.0021}$ \\
    $\,$ {SWAN} & 67.51$_{\pm0.39}$ & 0.2485$_{\pm0.0009}$\\
    \midrule
    \textbf{Graph SSMs} \\
    $\,$ Graph-Mamba & 67.39$_{\pm0.87}$ &  {0.2478$_{\pm0.0016}$} \\
    $\,$ GMN & 70.71$_{\pm0.83}$ &  {0.2473$_{\pm0.0025}$} \\
    \midrule
    
    \textbf{Ours} \\
    $\,$ \ourmethod$_{\textsc{GCN}}$ & \three{70.93$_{\pm0.78}$} & \two{0.2439$_{\pm0.0017}$}  \\
    $\,$ \ourmethod$_{\textsc{GatedGCN}}$ & 70.49$_{\pm0.51}$  & \three{0.2459$_{\pm 0.0020}$}  \\
    $\,$ \ourmethod$_{\textsc{GPS}}$ & 69.83$_{\pm0.83}$  & \one{0.2436$_{\pm0.0022}$}  \\
    
    \bottomrule\hline
    \end{tabular}
    \end{table}

In Table~\ref{tab:results_hetero_complete}, we report the complete results for the heterophilic tasks, including more Heterophily-Designed GNNs, graph Transformers, MPNNs, and graph-agnostic models. By doing so, we included results from \cite{finkelshtein2024cooperative, behrouz2024graphmamba, platonov2023a, muller2024attending, luan2024heterophilicgraphlearninghandbook}.

\begin{table}[h]
\setlength{\tabcolsep}{3pt}
\centering
\caption{Mean test set score and std averaged over 4 random weight initializations on heterophilic datasets. The higher, the better. 
\one{First}, \two{second}, and \three{third}  best results for each task are color-coded. Baseline results are reported from \cite{finkelshtein2024cooperative, behrouz2024graphmamba, platonov2023a, muller2024attending, luan2024heterophilicgraphlearninghandbook}. 
}
\label{tab:results_hetero_complete}
\scriptsize
\begin{tabular}{lccccc}
\hline\toprule
\multirow{2}{*}{\textbf{Model}} & \textbf{Roman-empire} & \textbf{Amazon-ratings} & \textbf{Minesweeper} & \textbf{Tolokers} & \textbf{Questions}\\
& \scriptsize{Acc $\uparrow$} & \scriptsize{Acc $\uparrow$} & \scriptsize{AUC $\uparrow$} & \scriptsize{AUC $\uparrow$} & \scriptsize{AUC $\uparrow$}\\
\midrule
\multicolumn{4}{l}{\rebuttal{\textbf{\cite{luan2024heterophilicgraphlearninghandbook}}}} \\
\rebuttal{$\,$ MLP-2 }& \rebuttal{66.04$_{\pm0.71}$ }& \rebuttal{49.55$_{\pm0.81}$} & \rebuttal{50.92$_{\pm1.25}$ } & \rebuttal{74.58$_{\pm0.75}$ }& \rebuttal{69.97$_{\pm1.16}$}\\
\rebuttal{$\,$ SGC-1 }& \rebuttal{44.60$_{\pm0.52}$ }& \rebuttal{40.69$_{\pm0.42}$} & \rebuttal{82.04$_{\pm0.77}$ } & \rebuttal{73.80$_{\pm1.35}$ }& \rebuttal{71.06$_{\pm0.92}$}\\
\rebuttal{$\,$ MLP-1 }& \rebuttal{64.12$_{\pm0.61}$ }& \rebuttal{ 38.60$_{\pm0.41}$} & \rebuttal{ 50.59$_{\pm0.83}$ } & \rebuttal{71.89$_{\pm0.82}$ }& \rebuttal{70.33$_{\pm0.96}$}\\
\midrule
\multicolumn{4}{l}{\textbf{Graph-agnostic}} \\
$\,$ ResNet      & 65.88$_{\pm0.38}$ & 45.90$_{\pm0.52}$ & 50.89$_{\pm1.39}$ & 72.95$_{\pm1.06}$ & 70.34$_{\pm0.76}$\\
$\,$ ResNet+SGC  & 73.90$_{\pm0.51}$ & 50.66$_{\pm0.48}$ & 70.88$_{\pm0.90}$ & 80.70$_{\pm0.97}$ & 75.81$_{\pm0.96}$\\
$\,$ ResNet+adj  & 52.25$_{\pm0.40}$ & 51.83$_{\pm0.57}$ & 50.42$_{\pm0.83}$ & 78.78$_{\pm1.11}$ & 75.77$_{\pm1.24}$\\
\midrule
\textbf{MPNNs} \\
$\,$ ARMA & 87.11$_{\pm{0.38}}$ & 49.94$_{\pm0.30}$ & 91.64$_{\pm1.21}$ & 82.29$_{\pm0.97}$ & 77.75$_{\pm0.85}$ \\
$\,$ GAT         & 80.87$_{\pm0.30}$ & 49.09$_{\pm0.63}$ & 92.01$_{\pm0.68}$ & 83.70$_{\pm0.47}$ & 77.43$_{\pm1.20}$\\
$\,$ GAT-sep     & 88.75$_{\pm0.41}$ & 52.70$_{\pm0.62}$ & 93.91$_{\pm0.35}$ & 83.78$_{\pm0.43}$ & 76.79$_{\pm0.71}$\\
$\,$ GAT (LapPE) & 84.80$_{\pm0.46}$ & 44.90$_{\pm0.73}$ & 93.50$_{\pm0.54}$ & 84.99$_{\pm0.54}$ & 76.55$_{\pm0.84}$\\
$\,$ GAT (RWSE) & 86.62$_{\pm0.53}$ & 48.58$_{\pm0.41}$ & 92.53$_{\pm0.65}$ & 85.02$_{\pm0.67}$ & 77.83$_{\pm1.22}$\\
$\,$ GAT (DEG) & 85.51$_{\pm0.56}$ & 51.65$_{\pm0.60}$ & 93.04$_{\pm0.62}$ & 84.22$_{\pm0.81}$ & 77.10$_{\pm1.23}$\\
$\,$ Gated-GCN & 74.46$_{\pm0.54}$ & 43.00$_{\pm0.32}$ &  87.54$_{\pm1.22}$ &  77.31$_{\pm1.14}$ & --\\
$\,$ GCN         & 73.69$_{\pm0.74}$ & 48.70$_{\pm0.63}$ & 89.75$_{\pm0.52}$ & 83.64$_{\pm0.67}$ & 76.09$_{\pm1.27}$\\
$\,$ GCN (LapPE) & 83.37$_{\pm0.55}$ & 44.35$_{\pm0.36}$ & 94.26$_{\pm0.49}$ & 84.95$_{\pm0.78}$ & 77.79$_{\pm1.34}$\\
$\,$ GCN (RWSE) & 84.84$_{\pm0.55}$ & 46.40$_{\pm0.55}$ & 93.84$_{\pm0.48}$ & 85.11$_{\pm0.77}$ & 77.81$_{\pm1.40}$\\
$\,$ GCN (DEG) & 84.21$_{\pm0.47}$ & 50.01$_{\pm0.69}$ & 94.14$_{\pm0.50}$ & 82.51$_{\pm0.83}$ & 76.96$_{\pm1.21}$\\
$\,$ CO-GNN($\Sigma$, $\Sigma$) & \three{91.57$_{\pm0.32}$} &  51.28$_{\pm0.56}$ &  95.09$_{\pm1.18}$ &  83.36$_{\pm0.89}$ & \two{80.02$_{\pm0.86}$} \\
$\,$ CO-GNN($\mu$, $\mu$) & 91.37$_{\pm0.35}$ &  \one{54.17$_{\pm0.37}$} &  \three{97.31$_{\pm0.41}$} &  84.45$_{\pm1.17}$ & 76.54$_{\pm0.95}$ \\
$\,$ SAGE        & 85.74$_{\pm0.67}$ & 53.63$_{\pm0.39}$ & 93.51$_{\pm0.57}$ & 82.43$_{\pm0.44}$ & 76.44$_{\pm0.62}$\\
\midrule
\textbf{Graph Transformers} \\
$\,$ Exphormer   & 89.03$_{\pm0.37}$  &  53.51$_{\pm0.46}$  &  90.74$_{\pm0.53}$  &  83.77$_{\pm0.78}$ & 73.94$_{\pm1.06}$\\
$\,$ NAGphormer  & 74.34$_{\pm0.77}$  &  51.26$_{\pm0.72}$  &  84.19$_{\pm0.66}$  &  78.32$_{\pm0.95}$ & 68.17$_{\pm1.53}$\\
$\,$ GOAT        & 71.59$_{\pm1.25}$  &  44.61$_{\pm0.50}$  &  81.09$_{\pm1.02}$  &  83.11$_{\pm1.04}$ & 75.76$_{\pm1.66}$\\
$\,$ GPS         & 82.00$_{\pm0.61}$  &  53.10$_{\pm0.42}$  &  90.63$_{\pm0.67}$  &  83.71$_{\pm0.48}$ & 71.73$_{\pm1.47}$\\
$\,$ GPS\textsubscript{GCN+Performer} (LapPE) & 83.96$_{\pm0.53}$ & 48.20$_{\pm0.67}$ & 93.85$_{\pm0.41}$ & 84.72$_{\pm0.77}$ & 77.85$_{\pm1.25}$\\
$\,$ GPS\textsubscript{GCN+Performer} (RWSE) & 84.72$_{\pm0.65}$ & 48.08$_{\pm0.85}$ & 92.88$_{\pm0.50}$ & 84.81$_{\pm0.86}$ & 76.45$_{\pm1.51}$\\
$\,$ GPS\textsubscript{GCN+Performer} (DEG) & 83.38$_{\pm0.68}$ & 48.93$_{\pm0.47}$ & 93.60$_{\pm0.47}$ & 80.49$_{\pm0.97}$ & 74.24$_{\pm1.18}$\\
$\,$ GPS\textsubscript{GAT+Performer} (LapPE) & 85.93$_{\pm0.52}$ & 48.86$_{\pm0.38}$ & 92.62$_{\pm0.79}$ & 84.62$_{\pm0.54}$ & 76.71$_{\pm0.98}$\\
$\,$ GPS\textsubscript{GAT+Performer} (RWSE) & 87.04$_{\pm0.58}$ & 49.92$_{\pm0.68}$ & 91.08$_{\pm0.58}$ & 84.38$_{\pm0.91}$ & 77.14$_{\pm1.49}$\\
$\,$ GPS\textsubscript{GAT+Performer} (DEG) & 85.54$_{\pm0.58}$ & 51.03$_{\pm0.60}$ & 91.52$_{\pm0.46}$ & 82.45$_{\pm0.89}$ & 76.51$_{\pm1.19}$\\
$\,$ GPS\textsubscript{GCN+Transformer} (LapPE) & OOM &  OOM & 91.82$_{\pm0.41}$ & 83.51$_{\pm0.93}$ & OOM \\
$\,$ GPS\textsubscript{GCN+Transformer} (RWSE) & OOM &  OOM & 91.17$_{\pm0.51}$ & 83.53$_{\pm1.06}$ & OOM \\
$\,$ GPS\textsubscript{GCN+Transformer} (DEG) & OOM &  OOM & 91.76$_{\pm0.61}$ & 80.82$_{\pm0.95}$ & OOM \\
$\,$ GPS\textsubscript{GAT+Transformer} (LapPE) & OOM &  OOM & 92.29$_{\pm0.61}$ & 84.70$_{\pm0.56}$ & OOM \\
$\,$ GPS\textsubscript{GAT+Transformer} (RWSE) & OOM &  OOM & 90.82$_{\pm0.56}$ & 84.01$_{\pm0.96}$ & OOM \\
$\,$ GPS\textsubscript{GAT+Transformer} (DEG) & OOM &  OOM & 91.58$_{\pm0.56}$ & 81.89$_{\pm0.85}$ & OOM \\
$\,$ GT          & 86.51$_{\pm0.73}$ & 51.17$_{\pm0.66}$ & 91.85$_{\pm0.76}$ & 83.23$_{\pm0.64}$ & 77.95$_{\pm0.68}$\\
$\,$ GT-sep      & 87.32$_{\pm0.39}$ & 52.18$_{\pm0.80}$ & 92.29$_{\pm0.47}$ & 82.52$_{\pm0.92}$ & 78.05$_{\pm0.93}$\\
\midrule
\multicolumn{4}{l}{\textbf{Heterophily-Designated GNNs}} \\
$\,$ CPGNN       & 63.96$_{\pm0.62}$ & 39.79$_{\pm0.77}$ & 52.03$_{\pm5.46}$ & 73.36$_{\pm1.01}$ & 65.96$_{\pm1.95}$\\
$\,$ FAGCN       & 65.22$_{\pm0.56}$ & 44.12$_{\pm0.30}$ & 88.17$_{\pm0.73}$ & 77.75$_{\pm1.05}$ & 77.24$_{\pm1.26}$\\
$\,$ FSGNN       & 79.92$_{\pm0.56}$ & 52.74$_{\pm0.83}$ & 90.08$_{\pm0.70}$ & 82.76$_{\pm0.61}$ & 78.86$_{\pm0.92}$\\
$\,$ GBK-GNN     & 74.57$_{\pm0.47}$ & 45.98$_{\pm0.71}$ & 90.85$_{\pm0.58}$ & 81.01$_{\pm0.67}$ & 74.47$_{\pm0.86}$\\
$\,$ GloGNN      & 59.63$_{\pm0.69}$ & 36.89$_{\pm0.14}$ & 51.08$_{\pm1.23}$ & 73.39$_{\pm1.17}$ & 65.74$_{\pm1.19}$\\
$\,$ GPR-GNN     & 64.85$_{\pm0.27}$ & 44.88$_{\pm0.34}$ & 86.24$_{\pm0.61}$ & 72.94$_{\pm0.97}$ & 55.48$_{\pm0.91}$\\
$\,$ H2GCN       & 60.11$_{\pm0.52}$ & 36.47$_{\pm0.23}$ & 89.71$_{\pm0.31}$ & 73.35$_{\pm1.01}$ & 63.59$_{\pm1.46}$\\
$\,$ JacobiConv  & 71.14$_{\pm0.42}$ & 43.55$_{\pm0.48}$ & 89.66$_{\pm0.40}$ & 68.66$_{\pm0.65}$ & 73.88$_{\pm1.16}$\\
\midrule
\textbf{Graph SSMs} \\
$\,$ GMN          & 87.69$_{\pm0.50}$  &  \two{54.07$_{\pm0.31}$}  &  91.01$_{\pm0.23}$  &  84.52$_{\pm0.21}$ & --\\
$\,$ GPS + Mamba  & 83.10$_{\pm0.28}$  &  45.13$_{\pm0.97}$  &  89.93$_{\pm0.54}$  &  83.70$_{\pm1.05}$ & --\\
\midrule
\textbf{Ours} \\
$\,$ \ourmethod$_{\textsc{GCN}}$ & 88.61$_{\pm0.43}$ & 53.48$_{\pm0.62}$ & 95.27$_{\pm0.71}$  & \one{86.23$_{\pm1.10}$} & \three{79.23$_{\pm1.16}$} \\
$\,$ \ourmethod$_{\textsc{GatedGCN}}$ & \one{91.82$_{\pm0.39}$} & \three{53.71$_{\pm0.57}$} & \two{98.19$_{\pm0.58}$} & \three{85.42$_{\pm0.95}$} & \one{80.47$_{\pm1.09}$}\\
$\,$ \ourmethod$_{\textsc{GPS}}$ & \two{91.73$_{\pm0.59}$} & 53.36$_{\pm0.38}$ & \one{98.33$_{\pm0.55}$} & \two{85.71$_{\pm0.98}$} & 79.11$_{\pm1.19}$ \\

\bottomrule\hline      
\end{tabular}
\end{table}

{\color{rebuttalcolor}
\subsection{Additional GNN Benchmarks}\label{app:additional_benchmarks}
To further evaluate the performance of our \ourmethod, we strengthen the evaluation proposed in Section~\ref{sec:experiments} by considering additional popular GNN benchmarks, such as ZINC-12k \citep{zink}, OGBG-MOLHIV \citep{molhiv}, Cora, CiteSeer, and PubMed \citep{cora}. ZINC-12k and OGBG-MOLHIV are datasets where graphs represent molecules (\ie nodes are atoms, and edges are chemical bonds) and the objective is to predict molecular properties; while Cora, CiteSeer, and PubMed are citation networks where each node represents a paper and each edge indicates that one paper cites another one, whose objective is to predict the class associated to each node. On the ZINC-12k and OGBG-MOLHIV, we followed the official splits and experimental protocols from \cite{zink} and \cite{molhiv}, respectively. On Cora, CiteSeer, and PubMed, we used the splits and experimental protocols from \cite{geomgcn}. In Table~\ref{tab:cora}, we compare the performance of our \ourmethod{} combined with three different backbones, \,  i.e., GCN, GatedGCN, and GPS, with baseline models. Our results show that our \ourmethod{} significantly improves the downstream performance of the baseline backbone models.
}

\begin{table}[h]
\centering
\caption{\rebuttal{Mean test set score and std on popular GNN Benchmarks. \one{First}, \two{second}, and \three{third}  best results for each task are color-coded.}}
\label{tab:cora}
\small
\begin{tabular}{lccccc}
\hline\toprule
\multirow{2}{*}{\textbf{Model}}        & \textbf{ZINC-12k}  & \textbf{OGBG-MOLHIV} & \textbf{Cora}      & \textbf{CiteSeer} & \textbf{PubMed} \\
                      & \scriptsize{MAE $\downarrow$}   & \scriptsize{AUC $\uparrow$}       & \scriptsize{Acc $\uparrow$}     & \scriptsize{Acc $\uparrow$}    & \scriptsize{Acc $\uparrow$}  \\\midrule
GCN                   & 0.278$_{\pm0.003}$ & 76.06$_{\pm0.97}$    & 85.77$_{\pm1.27}$  & 73.68$_{\pm1.36}$ & 88.13$_{\pm0.50}$      \\
GatedGCN              & 0.254$_{\pm0.005}$ & 76.72$_{\pm0.88}$    & 86.21$_{\pm1.28}$  & 74.10$_{\pm1.22}$ & 88.09$_{\pm0.44}$      \\
GPS                   & 0.125$_{\pm0.009}$ & 77.39$_{\pm1.14}$    & 85.42$_{\pm1.80}$  & 73.99$_{\pm1.57}$ & 88.23$_{\pm0.61}$      \\
GPS + RWSE            & \two{0.070$_{\pm0.004}$} & \two{78.80$_{\pm1.01}$}    & 86.67$_{\pm1.53}$  & 74.52$_{\pm1.49}$ & 88.94$_{\pm0.49}$      \\
\midrule
\textbf{Ours} \\
$\,$ \ourmethod$_{\textsc{GCN}}$      & 0.142$_{\pm0.010}$ & 77.47$_{\pm1.05}$ & \three{88.02$_{\pm1.01}$}  & 77.09$_{\pm1.53}$ & \two{90.20$_{\pm0.47}$}      \\
$\,$ \ourmethod$_{\textsc{GatedGCN}}$ & 0.140$_{\pm0.008}$ & 77.60$_{\pm0.98}$ & \two{88.13$_{\pm0.99}$}  & \two{77.63$_{\pm1.38}$} & \three{90.07$_{\pm0.45}$}      \\
$\,$ \ourmethod$_{\textsc{GPS}}$      & \three{0.100$_{\pm0.006}$} & \three{78.19$_{\pm1.10}$} & 87.95$_{\pm1.72}$  & \three{77.13$_{\pm1.51}$} & 89.76$_{\pm0.64}$      \\
$\,$ \ourmethod$_{\textsc{GPS}}$+RWSE & \one{0.061$_{\pm0.003}$} & \one{79.21$_{\pm0.94}$} & \one{88.37$_{\pm1.64}$}  & \one{77.68$_{\pm1.55}$} & \one{90.31$_{\pm0.58}$}     \\
\bottomrule\hline
\end{tabular}
\end{table}

{\color{rebuttalcolor}
\subsection{Additional Spatio-Temporal Benchmarks}\label{app:exp_dyn_graph}
Given the inspiration of \ourmethod{} from sequential models, by transforming a graph into sequences of the graph, it is interesting to understand if it can be utilized for spatio-temporal datasets. In this section we preliminary results with datasets from \citet{pygtemporal}. Specifically, we employ Chickenpox Hungary, PedalMe London, and Wikipedia Math, where the goal is to predict future values given past values. In Table~\ref{tab:temporal}, we compare the MSE score of our method with two state-of-the-art approaches in the spatio-temporal domain, \ie A3T-GCN \citep{a3tgcn} and T-GCN \citep{tgcn}. Our results show that our \ourmethod{} is a promising approach for processing spatio-temporal data as well. 
\\
It is important to note that addressing spatio-temporal datasets is not the main goal of this paper. Rather, our \ourmethod{} addresses fundamental issues in existing methods that utilize graph SSMs and the oversquashing issue in static graphs, and studying and extending our \ourmethod{} to spatio-temoporal datasets is an interesting future work direction.
}

\begin{table}[h]
\centering
\caption{\rebuttal{Mean test set MSE and std on spatio-temporal datasets. The \one{best} result for each task is color-coded.}}
\label{tab:temporal}
\small
\begin{tabular}{lccc}
\hline\toprule
\textbf{Model}    & \textbf{Chickenpox Hungary} & \textbf{PedalMe London} & \textbf{Wikipedia Math} \\
\midrule
\textbf{Baselines} \\
$\,$ A3T-GCN  & 1.114$_{\pm0.008}$        & 1.469$_{\pm0.027}$    & 0.781$_{\pm0.011}$    \\
$\,$ T-GCN    & 1.117$_{\pm0.011}$        & 1.479$_{\pm0.012}$    & 0.764$_{\pm0.011}$    \\
\midrule
\textbf{Our} \\
$\,$ \ourmethod$_{\textsc{GCN}}$  & \one{0.790$_{\pm0.031}$}        & \one{1.089$_{\pm0.049}$}    & \one{0.608$_{\pm0.019}$}   \\
\bottomrule\hline
\end{tabular}
\end{table}

{\color{rebuttalcolor}

\section{Summary of the results}
In this section, we provide a summary of the results achieved by \ourmethod. Specifically, we report Table~\ref{tab:summary} the performance of our \ourmethod{} with respect to the baseline backbone GNNs (\ie GCN, GatedGCN, and GPS)) and in Table~\ref{tab:summary_best} the comparison with the best baseline out of all methods in tables. As evidenced from both Table~\ref{tab:summary} and Table~\ref{tab:summary_best}, our \ourmethod{} offers significant and consistent improvements over the baseline backbone GNNs as well as better performance than current state-of-the-art performing methods. Therefore, we believe that our proposed method offers a significant improvement not only in terms of designing a principled and mathematically sound model, which is equivalent to a Graph SSM model that preserves permutation-equivariance and allows long-range propagation, but also in terms of downstream performance on real-world applications and benchmarks. 
}

\begin{table}[h]
\setlength{\tabcolsep}{3pt}
\centering
\caption{\rebuttal{Summary of the performance of our \ourmethod{} with respect to backbone GNNs. The \one{best} result for each task is color-coded.}}
\label{tab:summary}
\scriptsize
\begin{tabular}{lcccccc}
\hline\toprule
\multirow{2}{*}{\textbf{Task $\downarrow$ / Model $\rightarrow$}} &&&& \multicolumn{3}{c}{\textbf{Ours}}\\\cmidrule{5-7}
                           & \textbf{GCN}              & \textbf{GatedGCN}      & \textbf{GPS}            & \textbf{\ourmethod$_{\textsc{GCN}}$}      & \textbf{\ourmethod$_{\textsc{GatedGCN}}$}     & \textbf{\ourmethod$_{\textsc{GPS}}$}         \\
\midrule
Diameter \scalebox{.7}{($log_{10}(MSE)$ $\downarrow$)}  & 0.7424$_{\pm0.0466}$    & 0.1348$_{\pm0.0397}$ & -0.5121$_{\pm0.0426}$ & \multicolumn{1}{|c}{0.2577$_{\pm0.0368}$} & -0.5485$_{\pm0.1489}$     & \one{-0.8663$_{\pm0.0514}$} \\
SSSP \scalebox{.7}{($log_{10}(MSE)$ $\downarrow$)}      & 0.9499$_{\pm9.18\cdot10^{-5}}$ & -3.261$_{\pm0.0514}$ & -3.599$_{\pm0.1949}$  & \multicolumn{1}{|c}{0.0095$_{\pm0.0877}$} & \one{-4.1289$_{\pm0.0988}$} & -3.9349$_{\pm0.0699}$     \\
Ecc. \scalebox{.7}{($log_{10}(MSE)$ $\downarrow$)}      & 0.8468$_{\pm0.0028}$    & 0.6995$_{\pm0.0302}$ & 0.6077$_{\pm0.0282}$  & \multicolumn{1}{|c}{0.6193$_{\pm0.0441}$} & 0.5523$_{\pm0.0511}$      & \one{-1.3012$_{\pm0.1258}$} \\\midrule
Pept.-func \scalebox{.7}{(AP $\uparrow$)}               & 59.30$_{\pm0.23}$       & 58.64$_{\pm0.77}$    & 65.35$_{\pm0.41}$     & \multicolumn{1}{|c}{\one{70.93$_{\pm0.78}$}} & 70.49$_{\pm0.51}$         & 69.83$_{\pm0.83}$         \\
Pept.-struct \scalebox{.7}{(MAE $\downarrow$)}          & 0.3496$_{\pm0.0013}$    & 0.3420$_{\pm0.0013}$ & 0.2500$_{\pm0.0005}$  & \multicolumn{1}{|c}{0.2439$_{\pm0.0017}$} & 0.2459$_{\pm0.0020}$      & \one{0.2436$_{\pm0.0022}$}  \\\midrule
MalNet-Tiny \scalebox{.7}{(Acc $\uparrow$)}             & 81.00                   & 92.23$_{\pm0.65}$    & 92.64$_{\pm0.78}$     & \multicolumn{1}{|c}{93.43$_{\pm0.29}$} & 93.66$_{\pm0.40}$         & \one{94.37$_{\pm0.36}$}    \\\midrule
Roman-empire \scalebox{.7}{(Acc $\uparrow$)}            & 73.69$_{\pm0.74}$       & 74.46$_{\pm0.54}$    & 82.00$_{\pm0.61}$     & \multicolumn{1}{|c}{88.61$_{\pm0.43}$} & \one{91.82$_{\pm0.39}$}     & 91.73$_{\pm0.59}$         \\
Amazon-ratings \scalebox{.7}{(Acc $\uparrow$)}          & 48.70$_{\pm0.63}$       & 43.00$_{\pm0.32}$    & 53.10$_{\pm0.42}$     & \multicolumn{1}{|c}{53.48$_{\pm0.62}$} & \one{53.71$_{\pm0.57}$}     & 53.36$_{\pm0.38}$         \\
Minesweeper \scalebox{.7}{(AUC $\uparrow$)}             & 89.75$_{\pm0.52}$       & 87.54$_{\pm1.22}$    & 90.63$_{\pm0.67}$     & \multicolumn{1}{|c}{95.27$_{\pm0.71}$} & 98.19$_{\pm0.58}$         & \one{98.33$_{\pm0.55}$}     \\
Tolokers \scalebox{.7}{(AUC $\uparrow$)}                & 83.64$_{\pm0.67}$       & 77.31$_{\pm1.14}$    & 83.71$_{\pm0.48}$     & \multicolumn{1}{|c}{\one{86.23$_{\pm1.10}$}} & 85.42$_{\pm0.95}$         & 85.71$_{\pm0.98}$         \\
Questions \scalebox{.7}{(AUC $\uparrow$)}               & 76.09$_{\pm1.27}$       & 76.61$_{\pm1.13}$    & 71.73$_{\pm1.47}$     & \multicolumn{1}{|c}{79.23$_{\pm1.16}$} & \one{80.47$_{\pm1.09}$}     & 79.11$_{\pm1.19}$  \\
\bottomrule\hline
\end{tabular}
\end{table}

\begin{table}[h]
\centering
\caption{\rebuttal{Summary of the performance of our \ourmethod{} (best performing model out of 3 variants) with respect to the best baseline out of all methods in Tables. The \one{best} results for each task is color-coded. The ``Improvement'' column reports the difference in performance between \ourmethod{} and the best baseline}}
\label{tab:summary_best}
\small
\begin{tabular}{lll|c}
\hline\toprule
\textbf{Task $\downarrow$ / Model $\rightarrow$}  & \textbf{Best baseline} & \textbf{\ourmethod{}} & \textbf{Improvement} \\\midrule
Diameter \scalebox{.7}{($log_{10}(MSE)$ $\downarrow$)}  & -0.5981$_{\pm0.1145}$        & \one{-0.8663$_{\pm0.0514}$}  &  \cellcolor{goodcell}-0.2682                          \\
SSSP \scalebox{.7}{($log_{10}(MSE)$ $\downarrow$)}      & -3.5990$_{\pm0.1949}$         & \one{-4.1289$_{\pm0.0988}$} &  \cellcolor{goodcell}-0.5299                           \\ 
Ecc. \scalebox{.7}{($log_{10}(MSE)$ $\downarrow$)}      & -0.0739$_{\pm0.2190}$        & \one{-1.3012$_{\pm0.1258}$}  &  \cellcolor{goodcell}-1.2273     \\ \midrule
Pept.-func \scalebox{.7}{(AP $\uparrow$)}               & \one{71.50$_{\pm0.44}$}            & 70.93$_{\pm0.78}$       & \cellcolor{badcell}-0.57       \\ 
Pept.-struct \scalebox{.7}{(MAE $\downarrow$)}          & 0.2478$_{\pm0.0016}$         & \one{0.2436$_{\pm0.0022}$}   &  \cellcolor{goodcell}0.0042      \\ \midrule
MalNet-Tiny \scalebox{.7}{(Acc $\uparrow$)}             & 94.22$_{\pm0.24}$            & \one{94.37$_{\pm0.36}$}      &  \cellcolor{goodcell}0.15       \\ \midrule
Roman-empire \scalebox{.7}{(Acc $\uparrow$)}            & 91.37$_{\pm0.35}$            & \one{91.82$_{\pm0.39}$}      &  \cellcolor{goodcell}0.45       \\ 
Amazon-ratings \scalebox{.7}{(Acc $\uparrow$)}          & \one{54.17$_{\pm0.37}$}      & 53.71$_{\pm0.57}$             & \cellcolor{badcell}-0.46      \\ 
Minesweeper \scalebox{.7}{(AUC $\uparrow$)}             & 97.31$_{\pm0.41}$            & \one{98.33$_{\pm0.55}$}      &  \cellcolor{goodcell}1.02       \\ 
Tolokers \scalebox{.7}{(AUC $\uparrow$)}                & 84.52$_{\pm0.21}$            & \one{86.23$_{\pm1.10}$}      &  \cellcolor{goodcell}1.71       \\ 
Questions \scalebox{.7}{(AUC $\uparrow$)}               & 80.02$_{\pm0.86}$            & \one{80.47$_{\pm1.09}$}      &  \cellcolor{goodcell}0.45     \\ 
\bottomrule\hline
\end{tabular}
\end{table}

\end{document}